\newtheorem{theorem}{Theorem}
\newtheorem{lemma}{Lemma}
\newtheorem{definition}{Definition}
\newcommand{\bcomment}[1]{}
\newcommand{\TODO}[1]{}
\newcommand{\E}{\mathbb{E}}
\newcommand{\R}{\mathbb{R}}
\newcommand{\N}{\mathbb{N}}
\newcommand{\ind}{\mathds{1}}
\newcommand{\bfx}{\mathbf{x}}
\newcommand{\bfy}{\mathbf{y}}
\newcommand{\mcA}{\mathcal{A}}
\newcommand{\mcB}{\mathcal{B}}
\newcommand{\mcC}{\mathcal{C}}
\newcommand{\mcH}{\mathcal{H}}
\newcommand{\mcN}{\mathcal{N}}
\newcommand{\mcX}{\mathcal{X}}
\newcommand{\mcY}{\mathcal{Y}}
\newcommand{\mcZ}{\mathcal{Z}}
\newcommand{\tlh}{\tilde{h}}
\newcommand{\tlx}{\tilde{x}}
\newcommand{\bbe}{\mathbb{E}}
\newcommand{\bbp}{\mathbb{P}}
\newcommand{\sgn}{\operatorname{sgn}}
\newcommand{\eq}{\mathrel{\phantom{=}}}
\newcommand{\one}{\mathbf{1}}
\newcommand{\zero}{\mathbf{0}}
\newcommand{\multisetds}[2]{\bigg(\kern-.4em\binom{#1}{#2}\kern-.4em\bigg)}
\newcommand{\multisetin}[2]{\big(\kern-.3em\binom{#1}{#2}\kern-.3em\big)}
\newcommand{\multisetix}[2]{\left(\kern-.2em\binom{#1}{#2}\kern-.2em\right)}
\DeclareMathOperator*{\argmax}{argmax}
\newcommand{\arjun}[1]{\textcolor{black}{#1}}
\newcommand{\anote}[1]{}
\title{Lower Bounds on Adversarial Robustness from Optimal Transport}
\author{%
  Arjun Nitin Bhagoji \thanks{Equal contribution.} \\
  Department of Electrical Engineering\\
  Princeton University\\
  \texttt{abhagoji@princeton.edu} \\
   \And
   Daniel Cullina \footnotemark[1] \textsuperscript{ ,}\thanks{Work done while at Princeton University} \\
   Department of Electrical Engineering \\
   Pennsylvania State University \\
   \texttt{cullina@psu.edu} \\
   \AND
   Prateek Mittal \\
   Department of Electrical Engineering \\
   Princeton University \\
   \texttt{pmittal@princeton.edu} \\
}
\begin{document}

\maketitle

\begin{abstract}
  While progress has been made in understanding the robustness of machine learning classifiers to test-time adversaries (evasion attacks), fundamental questions remain unresolved. In this paper, we use optimal transport to characterize the minimum possible loss in an adversarial classification scenario. In this setting, an adversary receives a random labeled example from one of two classes, perturbs the example subject to a neighborhood constraint, and presents the modified example to the classifier. We define an appropriate cost function such that the minimum transportation cost between the distributions of the two classes determines the \emph{minimum $0-1$ loss for any classifier}. When the classifier comes from a restricted hypothesis class, the optimal transportation cost provides a lower bound. We apply our framework to the case of Gaussian data with norm-bounded adversaries and explicitly show matching bounds for the classification and transport problems as well as the optimality of linear classifiers. We also characterize the sample complexity of learning in this setting, deriving and extending previously known results as a special case. Finally, we use our framework to study the gap between the optimal classification performance possible and that currently achieved by state-of-the-art robustly trained neural networks for datasets of interest, namely, MNIST, Fashion MNIST and CIFAR-10.
\end{abstract}

\section{Introduction} \label{sec: intro}
Machine learning (ML) has become ubiquitous due to its impressive performance in a wide variety of domains such as image recognition \cite{Krizhevsky:2012:ICD:2999134.2999257,simonyan2014very}, natural language and speech processing \cite{collobert2011natural,hinton2012deep,deng2013new}, game-playing \cite{silver2017mastering,brown2017superhuman,moravvcik2017deepstack} and aircraft collision avoidance \cite{julian2016policy}.
 This ubiquity, however, provides adversaries with both the opportunity and incentive to strategically fool machine learning systems during both the training (poisoning attacks) \cite{biggio2012poisoning,rubinstein2009stealthy,mozaffari2015systematic,jagielski2018manipulating,bhagoji2018analyzing} and test (evasion attacks) \cite{biggio2013evasion, szegedy2013intriguing,goodfellow2014explaining,papernot2016limitations,moosavi2015deepfool,moosavi2016universal,carlini2017towards} phases.
 In an \emph{evasion attack}, an adversary adds imperceptible perturbations to inputs in the test phase to cause misclassification.
 A large number of adversarial example-based evasion attacks have been proposed against ML algorithms used for tasks such as image classification \cite{biggio2013evasion,szegedy2013intriguing,goodfellow2014explaining,carlini2017towards,papernot2016limitations,chen2017ead}, object detection \cite{xie2017adversarial,lu2017adversarial,chen2018robust}, image segmentation \cite{fischer2017adversarial,arnab_cvpr_2018} and speech recognition \cite{carlini2018audio,yuan2018commandersong}; generative models for image data \cite{kos2017adversarial} and even reinforcement learning algorithms \cite{kos2017delving,huang2017adversarial}.
 These attacks have been carried out in black-box \cite{szegedy2013intriguing,papernot2016transferability,papernot2016practical,liu2016delving,brendel2017decision,chen2017zoo,bhagoji2018practical} as well as in physical settings \cite{sharif2016accessorize,kurakin2016adversarial,Evtimov17,sitawarin2018rogue}. 

A wide variety of defenses based on adversarial training \cite{goodfellow2014explaining,tramer2017ensemble,madry_towards_2017}, input de-noising through transformations \cite{bhagoji2017dimensionality,dziugaite2016study,xu2017feature,das2018shield,shaham2018defending}, distillation \cite{papernot2016distillation}, ensembling \cite{abbasi2017robustness,bagnall2017training,smutz2016tree} and feature nullification \cite{wang2017adversary} were proposed to defend ML algorithms against evasion attacks, only for most to be rendered ineffective by stronger attacks \cite{carlini2016defensive,carlini2017adversarial,carlini2017magnet,pmlr-v80-athalye18a}.
 Iterative adversarial training \cite{madry_towards_2017} is a current state-of-the-art empirical defense.
 Recently, defenses that rely on adversarial training and are provably robust to small perturbations have been proposed \cite{raghunathan2018certified,sinha2017certifiable,kolter2017provable,gowal2018effectiveness} but are unable to achieve good generalization behavior on standard datasets such as CIFAR-10 \cite{krizhevsky2009learning}. \arjun{In spite of an active line of research that has worked to characterize the difficulty of learning in the presence of evasion adversaries by analyzing the sample complexity of learning classifiers for known distributions \cite{schmidt2018adversarially} as well as in the distribution-free setting \cite{cullina2018pac,yin2018rademacher,montasser2019vc}, fundamental questions remain unresolved.} One such question is, \emph{what is the behavior of the optimal achievable loss in the presence of an adversary?}
 
 In this paper, we derive bounds on the $0-1$ loss of classifiers while classifying adversarially modified data at test time, which is often referred to as \emph{adversarial robustness}.
 We first develop a framework that relates classification in the presence of an adversary and optimal transport with an appropriately defined adversarial cost function.
 For an arbitrary data distribution with two classes, we characterize \emph{optimal adversarial robustness} in terms of the transportation distance between the classes.
 When the classifier comes from a restricted hypothesis class, we obtain a lower bound on the minimum possible $0-1$ loss (or equivalently, an upper bound on the maximum possible classification accuracy).
 \anote{Maybe we should have a brief description of the main proof technique?}
 

 We then consider the case of a mixture of two Gaussians and derive matching upper and lower bounds for adversarial robustness by framing it as a convex optimization problem and proving the optimality of linear classifiers. For an $\ell_{\infty}$ adversary, we also present the explicit solution for this optimization problem and analyze its properties. Further, we derive an expression for sample complexity with the assumption of a Gaussian prior on the mean of the Gaussians which allows us to independently match and extend the results from Schmidt et al. \cite{schmidt2018adversarially} as a special case. 

Finally, in our experiments, we find transportation costs between the classes of empirical distributions of interest such as MNIST \cite{lecun1998mnist}, Fashion-MNIST \cite{xiao2017/online} and CIFAR-10 \cite{krizhevsky2009learning} for adversaries bounded by $\ell_2$ and $\ell_{\infty}$ distance constraints, and relate them to the classification loss of state-of-the-art robust classifiers. \arjun{Our results demonstrate that as the adversarial budget increases, the gap between current robust classifiers and the lower bound increases. This effect is especially pronounced for the CIFAR-10 dataset, providing a clear indication of the difficulty of robust classification for this dataset.}

\arjun{\textbf{What do these results imply?} First, the effectiveness of any defense for a given dataset can be directly analyzed by comparing its robustness to the lower bound. In particular, this allows us to identify regimes of interest where robust classification is possible. Our bound can be used to decide whether a particular adversarial budget is big or small.  Second, since our lower bound does not require any distributional assumptions on the data, we are able to directly apply it to empirical distributions, characterizing whether robust classification is possible.}
	
\arjun{Further, in the Gaussian setting, the optimal classifier in the adversarial case depends explicitly on the adversary's budget. The optimal classifier in the benign case (corresponding to a budget of $0$), differs from that for non-zero budgets. This immediately \emph{establishes a trade-off} between the benign accuracy and adversarial robustness achievable with a given classifier. This raises interesting questions about which classifier should actually be deployed and how large the trade-off is. From the explicit solution we derive in the Gaussian setting, we observe that non-robust features occur during classification due to a mismatch between the norms used by the adversary and that governing the data distribution. We expand upon this observation in Section \ref{subsec: gauss_special}, which was also made independently by Ilyas et al. \cite{ilyas2019adversarial}.}

\noindent \textbf{Contributions:} We summarize our contributions in this paper as follows: i) we develop a framework for finding general lower bounds for classification error in the presence of an adversary (adversarial robustness) using optimal transport, ii) we show matching upper and lower bounds for adversarial robustness as well as the sample complexity of attaining it for the case of Gaussian data and a convex, origin-symmetric constraint on the adversary and iii) we determine lower bounds on adversarial robustness for empirical datasets of interest and compare them to those of robustly trained classifiers.



\section{Preliminaries and Notation}\label{sec: prelim}
In this section, we set up the problem of learning in the presence of an evasion adversary. Such an adversary presents the learner with adversarially modified examples at test time but does not interfere with the training process \cite{szegedy2013intriguing,goodfellow2014explaining,carlini2017towards}. We also define notation for the rest of the paper and explain how other work on adversarial examples fits into our setting.

\begin{table}[h]
	\centering
\begin{tabular}{c|c}
  Symbol & Usage \\ \midrule
  $\mcX$ & Space of natural examples\\
  $\tilde{\mcX}$ & Space of examples produced by the adversary\\
  $N : \mcX \to 2^{\tilde{\mcX}}$ & Neighborhood constraint function for adversary\\
  $P$ & Distribution of labeled examples (on $\mcX \times \{-1,1\}$) \\
  \end{tabular}
  \caption{Basic notation for the adversarial learning problem}
  \label{tab:notation}
  \vspace{-10pt}
\end{table}

We summarize the basic notation in Table \ref{tab:notation}.
We now formally describe the learning problem. 
There is an unknown $P \in \mathbb{P}(\mcX \times \{-1,1\})$.
The learner receives labeled training data $(\bfx, \bfy) = ((x_0,y_0),\ldots,(x_{n-1},y_{n-1})) \sim P^n$ and must select a hypothesis $h$.
The evasion adversary receives a labeled natural example $(x_{\text{Test}},y_{\text{Test}}) \sim P$ and selects $\tilde{x} \in N(x_{\text{Test}})$, the set of adversarial examples in the neighborhood of $x_{\text{Test}}$.
The adversary gives $\tilde{x}$ to the learner and the learner must estimate $y_{\text{Test}}$. Their performance is measured by the $0$-$1$ loss, $\ell(y_{\text{Test}},h(\tilde{x}))$.

Examples produced by the adversary are elements of a space $\tilde{\mcX}$.
In most applications, $\mcX = \tilde{\mcX}$, but we find it useful to distinguish them to clarify some definitions.
We require $N(x)$ to be nonempty so some choice of $\tilde{x}$ is always available.
By taking $\mcX = \tilde{\mcX}$ and $N(x) = \{x\}$, we recover the standard problem of learning without an adversary. If $N_1,N_2$ are neighborhood functions and $N_1(x) \subseteq N_2(x)$ for all $x \in \mcX$, $N_2$ represents a stronger adversary.
When $\mcX = \tilde{\mcX}$, a neighborhood function $N$ can be defined using a distance $d$ on $\mcX$ and an adversarial constraint $\beta$: $N(x) = \{\tlx : d(x,\tlx) \leq \beta\}$. This provides an ordered family of adversaries of varying strengths used in previous work \cite{carlini2017towards,goodfellow2014explaining,schmidt2018adversarially}.

  The learner's error rate under the data distribution $P$ with an adversary constrained by the neighborhood function $N$ is
  \(
    L(N,P,h) = \E_{(x,y) \sim P}[ \max_{\tlx \in N(x)} \ell(h(\tlx),y)]
  \).


\section{Adversarial Robustness from Optimal transport}\label{sec: optimal_transport}
In this section, we explain the connections between adversarially robust classification and optimal transport.
At a high level, these arise from the following idea: if a pair of examples, one from each class, are adversarially indistinguishable, then any hypothesis can classify at most one of the examples correctly,
By finding families of such pairs, one can obtain \emph{lower bounds on classification error rate}.
When the set of available hypotheses is as large as possible, the best of these lower bounds is tight.

\noindent \textbf{Section Roadmap:} We will first review some basic concepts from optimal transport theory \cite{villani2008optimal}.
Then, we will define a cost function for adversarial classification as well as its associated potential functions that are needed to establish Kantorovich duality.
We show how a coupling between the conditional distributions of the two classes can be obtained by composing couplings derived from the adversarial strategy and the total variation distance, which links hypothesis testing and transportation costs.
Finally, we show that the potential functions have an interpretation in terms of classification, which leads to our theorem connecting adversarial robustness to the optimal transport cost.

\subsection{Basic definitions from optimal transport}\label{subsec: basic_ot}
In this section, we use capital letters for random variables and lowercase letters for points in spaces.
\paragraph{Couplings}
A coupling between probability distributions $P_X$ on $\mcX$ and $P_Y$ on $\mcY$ is a joint distribution on $\mcX \times \mcY$ with marginals $P_X$ and $P_Y$.
Let $\Pi(P_{X},P_{Y})$ be the set of such couplings.

\begin{definition}[Optimal transport cost]
  For a cost function $c: \mcX \times \mcY \rightarrow \mathbb{R} \cup \{+\infty\}$ and marginal distributions $P_X$ and $P_Y$, the optimal transport cost is 
  \begin{equation}
    C(P_X,P_Y) = \inf_{P_{XY} \in \Pi(P_{X},P_{Y})} \E_{(X,Y) \sim P_{XY}}[c(X,Y)].
    \label{primal-Kantorovich}
  \end{equation}
  
\bcomment{
    where $X$ has distribution $P_X$ and $Y$ has distribution $P_Y$, i.e. the distribution of $(X,Y)$ is a coupling of $(P,Q)$.

  Alternative notations:
  \begin{align}
    C(P_X,P_Y) = \inf_{X,Y} \E[c(X,Y)]\\
    C(P_X,P_Y) &= \inf_{P_{XY} \in \Pi(P_{X},P_{Y})} \E_{(X,Y) \sim P_{XY}}[c(X,Y)]\\
    C(P_X,P_Y) &= \inf_{P_{XY} \in \Pi(P_{X},P_{Y})} \int_{\mcX \times \mcY} c(x,y)\, dP_{XY}(x,y).
  \end{align}
}
\end{definition}

\paragraph{Potential functions and Kantorovich duality}There is a dual characterization of optimal transport cost in terms of potential functions which we use to make the connection between the transport and classification problems. 
\begin{definition}[Potential functions]
  Functions $f:\mcX \to \R$ and $g:\mcY \to \R$ are potential functions for the cost $c$ if $g(y) - f(x) \leq c(x,y)$ for all $(x,y) \in \mcX \times \mcY$.
\end{definition}
A pair of potential functions provide a one-dimensional representation of the spaces $\mcX$ and $\mcY$.
This representation must be be faithful to the cost structure on the original spaces: if a pair of points $(x,y)$ are close in transportation cost, then $f(x)$ must be close to $g(y)$.
In the dual optimization problem for optimal transport cost, we search for a representation that separates $P_X$ from $P_Y$ as much as possible:
\begin{equation}
C(P_X,P_Y) = \sup_{f,g} \E_{Y \sim P_Y}[g(Y)] - \E_{X \sim P_X}[f(X)]. \label{dual-Kantorovich}
\end{equation}
For any choices of $f$, $g$, and $P_{XY}$, it is clear that $\E[g(Y)] - \E[f(X)] \leq \E[c(X,Y)]$.
Kantorovich duality states that there are in fact choices for $f$ and $g$ that attain equality.

Define the dual of $f$ relative to $c$ to be $f^c(y) = \inf_x c(x,y) + f(x)$.
This is the largest function that forms a potential for $c$ when paired with with $f$.
In \eqref{dual-Kantorovich}, it is sufficient to optimize over pairs $(f,f^c)$.

\paragraph{Compositions}
    The composition of cost functions $c:\mcX \times \mcY \to \R$ and $c': \mcY \times \mcZ \to \R$ is
\begin{equation*}
  (c \circ c'):\mcX \times \mcZ \to \R\quad\quad\quad
  (c \circ c')(x,z) = \inf_{y \in \mcY} c(x,y) + c'(y,z). 
\end{equation*}
The composition of optimal transport costs can be defined in two equivalent ways:
\[
  (C \circ C')(P_X,P_Z) = \inf_{P_Y} C(P_X,P_Y) + C'(P_Y,P_Z) = \inf_{P_{XZ}} \E[(c \circ c')(X,Z)]
\]
\bcomment{
The composition of couplings $P_{XY}$ on $\mcX \times \mcY$ and $P_{YZ}$ on $\mcY \times \mcZ$ is defined as follows.
There is a unique distribution $P_{XYZ}$ on $\mcX \times \mcY \times \mcZ$ with $P_{XY}$ and $P_{YZ}$ as marginals.
The distribution $P_{XZ}$ obtained by marginalizing over $\mcY$ is the composition of $P_{XY}$ and $P_{YZ}$.

The composition of potentials $(f,g) : \mcX \times \mcY \to \R$ with potentials $(f',g')  : \mcY \times \mcZ \to \R$ is $(f + \inf_y (f'(y)-g(y)),g')$.
\TODO{Verify if this should be inf or sup}

}

\paragraph{Total variation distance}
The total variation distance between distributions $P$ and $Q$ is
\begin{equation}
  C_{\text{TV}}(P,Q) = \sup_A P(A) - Q(A). \label{TV}
\end{equation}
We use this notation because it is the optimal transport cost for the cost function $c_{\text{TV}}: \mcX \times \mcX \to \R$, $  c_{\text{TV}}(x,x') =
\one[x \neq x']$.
\bcomment{
\[
  c_{\text{TV}}(x,x') =
  \begin{cases}
    0 & x = x'\\
    1 & x \neq x'
  \end{cases}.
\]}
\TODO{Comment about c dual for this cost}
Observe that \eqref{TV} is equivalent to \eqref{dual-Kantorovich} with the additional restrictions that $f(x) \in \{0,1\}$ for all $x$, i.e. $f$ is an indicator function for some set $A$ and $g = f^{c_{\text{TV}}}$.

For binary classification with a symmetric prior on the classes, a set $A$ that achieves the optimum in Eq. \eqref{TV} corresponds to an optimal test for distinguishing $P$ from $Q$.
\bcomment{
The classification accuracy of the best test distinguishing $P$ from $Q$ is
\[
  \sup_A \frac{1}{2}P(A) + \frac{1}{2}Q(\overline{A}) = \sup_A \frac{1}{2}(1 + P(A) - Q(A)) = \frac{1}{2}(1 + C_{\text{TV}}(P,Q)).
\]
}

\subsection{Adversarial cost functions and couplings}\label{subsec: adv_cost}
We now construct specialized version of costs and couplings that translate between robust classification and optimal transport.

\paragraph{Cost functions for adversarial classification}
  The adversarial constraint information $N$ can be encoded into the following cost function $c_N: \mcX \times \tilde{\mcX} \to \R$: $c_N(x,\tilde{x}) = \one[\tilde{x} \not\in N(x)]$.
  \bcomment{
    \[
      c_N(x,\tilde{x}) =
      \begin{cases}
        0 &: \tilde{x} \in N(x)\\
        1 &: \tilde{x} \not\in N(x).
      \end{cases}
    \]
}
The composition of $c_N$ and $c_N^{\top}$ (i.e. $c_N$ with the arguments flipped) has simple combinatorial interpretation: $(c_N \circ c_N^{\top}) (x,x') = \one[N(x) \cap N(x') = \varnothing]$.
\bcomment{
    \begin{equation}
      (c_N \circ c_N^{\top}) (x,x') =
      \begin{cases}
        0 &: N(x) \cap N(x') \neq \varnothing\\
        1 &: N(x) \cap N(x') = \varnothing.
      \end{cases}
      \label{interpretation}
    \end{equation}
    }

Perhaps the most well-known example of optimal transport is the earth-mover's or $1$-Wasserstein distance, where the cost function is a metric on the underlying space. 
In general, the transportation cost $c_N \circ c_N^{\top}$ is  not a metric on $\mcX$ because $(c_N \circ c_N^{\top})(x,x') = 0$ does not necessarily imply $x = x'$.
However, when $(c_N \circ c_N^{\top})(x,x') = 0$, we say that the points are \emph{adversarially indistinguishible}.

\paragraph{Couplings from adversarial strategies}
Let $a: \mcX \to \tilde{\mcX}$ be a function such that $a(x) \in N(x)$ for all $x \in \mcX$.
Then $a$ is an admissible adversarial perturbation strategy.
The adversarial expected risk can be expressed as a maximization over adversarial strategies:
\(
  L(N,P,h) = \sup_{a_1,a_{-1}} \E_{(x,c) \sim P} [\ell(h(a_c(x)),c)]
\).
Let $\tilde{X}_1 = a_1(X_1)$, so $a_1$ gives a coupling $P_{X_1 \tilde{X}_1}$ between $P_{X_1}$ and $P_{\tilde{X}_1}$.
By construction, $C_N(P_{X_1},P_{\tilde{X}_1}) = 0$.
A general coupling between $P_{X_1}$ and $P_{\tilde{X}_1}$ with $C_N(P_{X_1},P_{\tilde{X}_1}) = 0$ corresponds to a randomized adversarial strategy.

We define $P_{\tilde{X}_{-1}}$ and $P_{X_{-1}\tilde{X}_{-1}}$ analogously.
By composing the adversarial strategy coupling $P_{X_1\tilde{X}_1}$, the total variation coupling of $P_{\tilde{X}_1}$ and $P_{\tilde{X}_{-1}}$, and $P_{\tilde{X}_{-1}X_{-1}}$, we obtain a coupling $P_{X_1 X_{-1}}$.

\paragraph{Potential functions from classifiers}

\begin{wrapfigure}{r}{0.5\textwidth}
	\begin{tikzpicture}[]
	\definecolor{darkgreen}{rgb}{0, 0.5, 0};
	
	\begin{scope}[shift = {(0,2)}]
	
	\draw[domain=-3:3,smooth,variable=\x] plot (\x,{exp(-(\x+0.6)*(\x+0.6))});
	\draw[domain=-3:3,smooth,variable=\x] plot (\x,{exp(-(\x-0.6)*(\x-0.6))});
	\draw[thick,<->] (-3,0) -- (3,0);
	\draw (-3.5,0) node{$\tilde{\mcX}$};
	\draw (-2.2,0.5) node{$P_{\tilde{X}_{-1}}$};
	\draw (2.2,0.5) node{$P_{\tilde{X}_1}$};
	\draw[thick,darkgreen,fill] (-3,-0.2) -- (-0.05,-0.2) -- (-0.05,-0.3) -- (-3,-0.3) --cycle;
	\draw[thick,blue,fill] (0.05,-0.2) -- (3,-0.2) -- (3,-0.3) -- (0.05,-0.3) --cycle;
	\draw (-1.5,-0.6) node{$h(x) = -1$};
	\draw ( 1.5,-0.6) node{$h(x) = 1$};
	
	\end{scope}
	\begin{scope}[shift = {(0,0)}]
	
	\draw[domain=-3:3,smooth,variable=\x] plot (\x,{exp(-(\x+1.6)*(\x+1.6))});
	\draw[domain=-3:3,smooth,variable=\x] plot (\x,{exp(-(\x-1.6)*(\x-1.6))});
	\draw[thick,<->] (-3,0) -- (3,0);
	\draw (-3.5,0) node{$\mcX$};
	\draw (-1.6,0.5) node{$P_{X_{-1}}$};
	\draw (1.6,0.5) node{$P_{X_1}$};
	\draw[thick,darkgreen,fill] (-3,-0.2) -- (-1.05,-0.2) -- (-1.05,-0.3) -- (-3,-0.3) --cycle;
	\draw[thick,blue,fill] (1.05,-0.2) -- (3,-0.2) -- (3,-0.3) -- (1.05,-0.3) --cycle;
	\draw[thick,red,fill] (-0.95,-0.2) -- (0.95,-0.2) -- (0.95,-0.3) -- (-0.95,-0.3) --cycle;
	
	\draw (-2,-0.6) node{$\tlh(x) = -1$};
	\draw ( 0,-0.6) node{$\tlh(x) = \bot$};
	\draw ( 2,-0.6) node{$\tlh(x) = 1$};
	
	\end{scope}
	\begin{scope}[shift = {(0,-2)}]
	
	\draw (-3.5,0) node{$0$};
	\draw (-3.5,1) node{$1$};
	\draw ( 0,0.8) node{$f$};
	\draw ( 0,0.2) node{$g$};
	
	\draw[thick,blue] (-3,1.1) -- (1,1.1) -- (1,0.1) -- (3,0.1);
	\draw[thick,darkgreen] (-3,1) -- (-1,1) -- (-1,0) -- (3,0);
	\end{scope}
	
	\end{tikzpicture}
	\caption{The relationships between a classifier $h : \mcX \to \{1,-1\}$, a degraded classifier $\tlh : \tilde{\mcX} \to \{1,-1,\bot\}$, and potential functions $f,g : \mcX \to \R$.}
	\label{fig:potentials}
	\vspace{-8mm}
\end{wrapfigure}

Now we can explore the relationship between transport and classification.
Consider a given hypothesis $h : \tilde{\mcX} \to \{-1,1\}$.
A labeled adversarial example $(\tlx,y)$ is classified correctly if $\tlx \in h^{-1}(y)$.
A labeled example $(x,y)$ is classified correctly if $N(x) \subseteq h^{-1}(y)$.
Following Cullina et al. \cite{cullina2018pac}, we define degraded hypotheses $\tlh : \mcX \to \{-1,1,\bot\}$,
\[
  \tlh(x) =
  \begin{cases}
    y &: N(x) \subseteq h^{-1}(y)\\
    \bot &: \text{otherwise}.
  \end{cases}  
\]
This allows us to express the adversarial classification accuracy of $h$, $1 - L(N,h,P)$, as 
\[
  \frac{1}{2} (\E[\one[\tlh(X_1)=1]] + \E[\one[\tlh(X_{-1})=-1]]).
\]

Observe that $\one[\tlh(x) = 1] + \one[\tlh(x') = -1] \leq (c_N \circ c_N^{\top})(x,x') + 1$.
Thus the functions $f(x) = 1 - \one[\tlh(x) = 1]$ and $g(x) = \one[\tlh(x) = -1]$ are admissible potentials for $c_N \circ c_N^{\top}$.
This is illustrated in Figure \ref{fig:potentials}.

Our first theorem characterizes optimal adversarial robustness when $h$ is allowed to be any classifier.
\begin{theorem}
  Let $\mcX$ and $\tilde{\mcX}$ be Polish spaces and let $N : \mcX \to 2^{\tilde{\mcX}}$ be an upper-hemicontinuous neighborhood function such that $N(x)$ is nonempty and closed for all $x$.
  For any pair of distributions $P_{X_1}$,$P_{X_{-1}}$ on $\mcX$,
\label{thm: transport}
  \[
    (C_N \circ C_N^{\top})(P_{X_1},P_{X_{-1}}) = 1 - 2 \inf_h L(N,h,P)
  \]
  where $h: \tilde{\mcX} \to \{1,-1\}$ can be any measurable function.
  Furthermore there is some $h$ that achieves the infimum.
\end{theorem}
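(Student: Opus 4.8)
The plan is to establish the theorem as a chain of (in)equalities, with the two inequalities coming from the two directions of the correspondence between classifiers and couplings sketched in the preceding paragraphs, and then invoke Kantorovich duality plus a compactness argument to close the loop and produce an optimal $h$.

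First I would prove the "easy" direction, $(C_N \circ C_N^{\top})(P_{X_1},P_{X_{-1}}) \geq 1 - 2\inf_h L(N,h,P)$. Take any measurable $h:\tilde{\mcX}\to\{1,-1\}$ and form the degraded hypothesis $\tlh$. As noted in the excerpt, $f(x) = 1 - \one[\tlh(x)=1]$ and $g(x) = \one[\tlh(x) = -1]$ are admissible potentials for the composed cost $c_N \circ c_N^{\top}$, so by the dual formulation \eqref{dual-Kantorovich} (applied to the cost $c_N\circ c_N^{\top}$, whose optimal transport cost is $C_N\circ C_N^{\top}$ by the composition identity),
\[
(C_N\circ C_N^{\top})(P_{X_1},P_{X_{-1}}) \geq \E[g(X_{-1})] - \E[f(X_1)] = \E[\one[\tlh(X_{-1})=-1]] + \E[\one[\tlh(X_1)=1]] - 1 = 1 - 2L(N,h,P),
\]
using the expression for adversarial accuracy $1 - L(N,h,P) = \tfrac12(\E[\one[\tlh(X_1)=1]] + \E[\one[\tlh(X_{-1})=-1]])$. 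Taking the infimum over $h$ gives the inequality.

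Next, the "hard" direction, $(C_N\circ C_N^{\top})(P_{X_1},P_{X_{-1}}) \leq 1 - 2\inf_h L(N,h,P)$. Here I would start from an optimal coupling realizing $C_N\circ C_N^{\top}$, which by the composition-of-transport identity decomposes through intermediate distributions $P_{\tilde X_1}, P_{\tilde X_{-1}}$ on $\tilde{\mcX}$: namely $C_N(P_{X_1},P_{\tilde X_1}) = 0$, $C_N(P_{X_{-1}},P_{\tilde X_{-1}}) = 0$ (so $P_{\tilde X_1}, P_{\tilde X_{-1}}$ arise from admissible randomized adversarial strategies), and $(C_N\circ C_N^{\top})(P_{X_1},P_{X_{-1}}) = C_{\mathrm{TV}}(P_{\tilde X_1},P_{\tilde X_{-1}})$. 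Now I take the set $A\subseteq\tilde{\mcX}$ achieving (or nearly achieving) the supremum in the total variation definition \eqref{TV}, and set $h = \one[\cdot\in A] - \one[\cdot\notin A]$, i.e. $h=1$ on $A$ and $h=-1$ on $\tilde{\mcX}\setminus A$. Then, bounding the adversarial risk by the randomized strategies that achieve the zero-cost $C_N$ couplings, $L(N,h,P) \leq \tfrac12 P_{\tilde X_1}(\tilde{\mcX}\setminus A) + \tfrac12 P_{\tilde X_{-1}}(A) = \tfrac12(1 - (P_{\tilde X_1}(A) - P_{\tilde X_{-1}}(A))) = \tfrac12(1 - C_{\mathrm{TV}}(P_{\tilde X_1},P_{\tilde X_{-1}}))$, which rearranges to the desired inequality. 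The main obstacle is making this argument rigorous in the Polish-space setting: one must justify that the optimal transport costs are actually attained (existence of optimal couplings), that the composition identity for transport costs holds with the stated intermediate distributions, and that the adversarial strategy couplings induce genuine measurable (possibly randomized) perturbation maps dominating the risk — this is exactly where the hypotheses that $N$ is upper-hemicontinuous with nonempty closed values are needed (via a measurable selection / Berge-type argument, so that $c_N$ is lower semicontinuous and the infima in the compositions are attained). I would handle the measurability of $N^{-1}$ and the selection carefully, citing Villani for attainment of optimal couplings for lower-semicontinuous costs on Polish spaces.

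Finally, for the "furthermore" clause, I would argue that the infimum over $h$ is attained. From the hard direction, the near-optimal $h$ built from a maximizing set $A$ in \eqref{TV} is in fact optimal once $A$ is an actual maximizer; such a maximizer exists because $C_{\mathrm{TV}}$ between two fixed measures is attained (e.g. by a Hahn decomposition of $P_{\tilde X_1} - P_{\tilde X_{-1}}$). Combined with the matching lower bound from the easy direction evaluated at this $h$, we get equality and an explicit optimal classifier. I expect the bookkeeping around "randomized adversarial strategy $\leftrightarrow$ zero-cost coupling" and the semicontinuity/selection technicalities to be the part requiring the most care; the arithmetic rearrangements are routine.
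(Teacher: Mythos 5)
Your first inequality (potentials from a degraded classifier, plus weak Kantorovich duality) is correct and matches the paper's setup. The genuine gap is in your "hard" direction, at the step $L(N,h,P) \leq \tfrac12 P_{\tilde X_1}(\tilde{\mcX}\setminus A) + \tfrac12 P_{\tilde X_{-1}}(A)$. The adversarial risk $L(N,h,P) = \E[\max_{\tlx\in N(x)}\ell(h(\tlx),y)]$ is a \emph{supremum} over admissible perturbation strategies, so the risk of $h$ under the particular (randomized) strategy extracted from the optimal transport decomposition is a \emph{lower} bound on $L(N,h,P)$, not an upper bound. The classifier you build from a Hahn-decomposition set $A$ for $P_{\tilde X_1}-P_{\tilde X_{-1}}$ is the optimal test against that one fixed pair of perturbed distributions; once it is deployed, the adversary re-optimizes against its decision boundary and will in general push additional mass of $P_{X_1}$ out of $A$ and of $P_{X_{-1}}$ into $A$. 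In effect you have computed $\sup_{a}\inf_h$ where the theorem concerns $\inf_h\sup_a$; the equality of these two quantities is precisely the substance of the theorem and cannot be taken for granted. The "furthermore" clause inherits the same gap, since the allegedly optimal $h$ is this TV test.

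Closing that gap is where the actual work lies, and the paper takes a different route: it shows $c_N\circ c_N^{\top}$ is lower semicontinuous (using upper hemicontinuity and closedness of $N$), invokes Villani's Theorem 5.10(ii) to obtain a $c$-cyclically monotone set $\Gamma$ on which every optimal coupling is concentrated, and then \emph{explicitly constructs $\{0,1\}$-valued dual potentials} $f=\ind(B')$, $g=\ind(B)$ through an alternating-chain construction of sets $A_i, A'_i, B_i, B'_i$. Cyclic monotonicity forces $c=1$ on $A\times B$, which simultaneously makes the indicator potentials admissible everywhere (so the induced classifier is robust against \emph{every} adversarial strategy) and tight on $\Gamma$ (so it matches the transport cost). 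This is the K\"onig--Egerv\'ary-type step the paper alludes to, and it is what produces an optimal $h$. To salvage your outline you would need either a minimax theorem for this game or a proof that, for your specific $h$, the decomposition strategy is already the worst case; neither is immediate, and the latter is false in general.
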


In the case of finite spaces, this theorem is essentially equivalent to the K\"{o}nig-Egerv\'{a}ry theorem on size of a maximum matching in a bipartite graph. The full proof is in Section \ref{appsec: thm1_proof} of the Appendix.


If instead of all measurable functions, we consider $h \in \mcH$, a smaller hypothesis class, Theorem \ref{thm: transport} provides a lower bound on $\inf_{h \in \mcH} L(N,h,P)$.


\section{Gaussian data: Optimal loss}\label{sec: gauss_data}
In this section, we consider the case when the data is generated from a mixture of two Gaussians with identical covariances and means that differ in sign.
Directly applying \eqref{primal-Kantorovich} or \eqref{dual-Kantorovich}, requires optimizing over either all classifiers or all transportation plans.
However, a classifier and a coupling that achieve the same cost must both be optimal.
We use this to show that optimizing over linear classifiers and `translate and pair' transportation plans characterizes adversarial robustness in this case.

\paragraph{Problem setup:}
Consider a labeled example $(X,Y) \in \R^d \times \{-1,1\}$ such that the example $X$ has a Gaussian conditional distribution, $X|(Y=y) \sim \mcN(y\mu,\Sigma)$, and $\Pr(Y=1) = \Pr(Y=-1) = \frac{1}{2}$.
Let $\mcB \subseteq \R^d$ be a closed, convex, absorbing, origin-symmetric set.
The adversary is constrained to add perturbations to a data point $x$ contained within $\beta\mcB$, where $\beta$ is an adversarial budget parameter.
That is, for all $x$, $N(x) = x + \beta\mcB$.
This includes $\ell_p$-constrained adversaries as the special case $\mcB = \{z : \|z\|_p \leq 1\}$. 
For $N$ and $P$ of this form, we will determine $\inf_h L(N,P,h)$ where $h$ can be any measurable function.


We first define the following convex optimization problem in order to state Theorem~\ref{thm:gauss-opt-transport}.
In the proof of Theorem~\ref{thm:gauss-opt-transport}, it will become clear how it arises.
\begin{definition}
 Let $\alpha^*(\beta,\mu)$ be the solution to the following convex optimization problem:
  \begin{equation}
  (z,y,\alpha) \in \R^{d+d+1}\quad\quad
  \min \alpha \quad\quad \text{s.t.} \quad
  \|y\|_{\Sigma} \leq \alpha\quad\quad
  \|z\|_{\mcB} \leq \beta\quad\quad
  z + y = \mu
  \label{convex-optimization}
\end{equation}
where we use the seminorms $\|y\|_{\Sigma} = \sqrt{y^{\top}\Sigma^{-1}y}$ and $\|z\|_{\mcB} = \inf \{\beta : z \in \beta\mcB\}$.
\end{definition}
\begin{theorem}
  \label{thm:gauss-opt-transport}
  Let $N(x) = x + \beta\mcB$. Then $(C_N \circ C_N^{\top})(\mcN(\mu, \Sigma),\mcN(-\mu, \Sigma)) = 1-2 Q(\alpha^*(\beta,\mu))$, where $Q$ is the complementary cumulative distribution function for $\mcN(0,1)$.
\end{theorem}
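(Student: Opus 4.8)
The plan is to reduce the statement to a claim about optimal adversarial loss via Theorem~\ref{thm: transport}: since $(C_N\circ C_N^{\top})(P_{X_1},P_{X_{-1}}) = 1 - 2\inf_h L(N,h,P)$, it suffices to show $\inf_h L(N,P,h) = Q(\alpha^*(\beta,\mu))$ for $P$ the symmetric Gaussian mixture, where $h$ ranges over all measurable classifiers. I would first record that the convex program \eqref{convex-optimization} is feasible (as $\mcB$ is absorbing) and attains its minimum (as $z\mapsto\|\mu-z\|_{\Sigma}$ is coercive and $\beta\mcB$ is closed), so that $\alpha^*(\beta,\mu) = \min\{\|\mu-z\|_{\Sigma} : \|z\|_{\mcB}\le\beta\}$, achieved by a splitting $\mu = z^*+y^*$ with $\|z^*\|_{\mcB}\le\beta$ and $\|y^*\|_{\Sigma}=\alpha^*$. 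Then I would prove the two matching bounds, as the section preamble anticipates: an explicit linear classifier for the upper bound and an explicit ``translate and pair'' coupling for the lower bound.

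For the upper bound, fix $w\in\R^d$ and take $h(x)=\sgn(w^{\top}x)$ (a zero threshold is optimal by the $\pm$-symmetry of the problem). The degraded classifier satisfies $\tilde h(x)\ne 1$ precisely when $w^{\top}x\le\beta\,\sigma_{\mcB}(w)$, where $\sigma_{\mcB}(w)=\sup_{z\in\mcB}w^{\top}z$ is the support function of $\mcB$ (here I use that $\mcB$ is closed and origin-symmetric; the boundary hyperplane $\{w^{\top}x=\beta\sigma_{\mcB}(w)\}$ has measure zero). Since $w^{\top}X\sim\mcN(w^{\top}\mu,\,w^{\top}\Sigma w)$ under $X\sim\mcN(\mu,\Sigma)$, and symmetrically for the other class, $L(N,P,h)=Q\!\left(\frac{w^{\top}\mu-\beta\sigma_{\mcB}(w)}{\sqrt{w^{\top}\Sigma w}}\right)$. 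Minimizing over $w$ amounts to maximizing the scale-invariant ratio, i.e.\ to computing $\sup_{w^{\top}\Sigma w\le 1}\bigl(w^{\top}\mu-\beta\sigma_{\mcB}(w)\bigr)$. Writing $\|\mu-z\|_{\Sigma}=\sup_{w^{\top}\Sigma w\le1}w^{\top}(\mu-z)$ (dual-norm representation) and swapping the order of optimization by Sion's minimax theorem, this equals $\min_{\|z\|_{\mcB}\le\beta}\sup_{w^{\top}\Sigma w\le1}w^{\top}(\mu-z)=\min_{\|z\|_{\mcB}\le\beta}\|\mu-z\|_{\Sigma}=\alpha^*(\beta,\mu)$. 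Hence $\inf_h L(N,P,h)\le Q(\alpha^*(\beta,\mu))$.

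For the lower bound I would construct a coupling of $\mcN(\mu,\Sigma)$ and $\mcN(-\mu,\Sigma)$ with $(c_N\circ c_N^{\top})$-cost at most $1-2Q(\alpha^*)$; together with Theorem~\ref{thm: transport} this forces equality. Use the admissible deterministic strategies $a_1(x)=x-z^*$ and $a_{-1}(x)=x+z^*$ (admissible since $\|z^*\|_{\mcB}\le\beta$); these push $\mcN(\pm\mu,\Sigma)$ forward to $\mcN(\pm y^*,\Sigma)$. Compose with the maximal coupling of $\mcN(y^*,\Sigma)$ and $\mcN(-y^*,\Sigma)$, which agrees with probability $1-C_{\text{TV}}\bigl(\mcN(y^*,\Sigma),\mcN(-y^*,\Sigma)\bigr)$; since the covariances coincide, $C_{\text{TV}}=1-2Q(\|y^*\|_{\Sigma})=1-2Q(\alpha^*)$. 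Pulling this back through $a_1,a_{-1}$ yields a coupling $(X_1,X_{-1})$ of the original Gaussians for which, on the agreement event, $X_1-X_{-1}=2z^*\in 2\beta\mcB=\beta\mcB-\beta\mcB$, so $N(X_1)\cap N(X_{-1})\ne\varnothing$ and the cost is $0$; off that event the cost is at most $1$. Thus the coupling has cost at most $1-2Q(\alpha^*)$, which finishes the proof.

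The main obstacle is the minimax identity $\sup_{w^{\top}\Sigma w\le1}\bigl(w^{\top}\mu-\beta\sigma_{\mcB}(w)\bigr)=\min_{\|z\|_{\mcB}\le\beta}\|\mu-z\|_{\Sigma}=\alpha^*$: the exchange of $\sup$ and $\min$ must be justified even when $\mcB$ (hence $\beta\mcB$) is unbounded, which I would handle by restricting the minimization to the compact set $\beta\mcB\cap\{z:\|\mu-z\|_{\Sigma}\le\|\mu\|_{\Sigma}\}$ — which still contains the minimizer — before applying Sion's theorem, and by a homogeneity argument to pass between the ratio and the constrained form. The remaining ingredients — the Gaussian tail probability for a halfspace, the closed form for $C_{\text{TV}}$ between equal-covariance Gaussians, admissibility of the translations, and the identity $\beta\mcB-\beta\mcB=2\beta\mcB$ from convexity and origin-symmetry — are routine.
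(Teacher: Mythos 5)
Your proof is correct, and its skeleton is the same as the paper's: both directions are obtained by exhibiting a linear classifier (whose degraded version errs on the halfspace $w^{\top}x \le \beta\sigma_{\mcB}(w)$) and a ``translate by $\pm z^*$, then TV-couple'' transport plan, and the two bounds meet at $Q(\alpha^*)$. The one genuine difference is how you certify that they meet. The paper isolates this as Lemma~\ref{lemma:slackness}: it writes the convex program \eqref{convex-optimization} in conic form, derives the Lagrangian dual $\max\{\mu^{\top}w - \alpha\eta : \|{-w}\|_{\mcB}^* \le \eta,\ \|{-w}\|_{\Sigma}^* \le 1\}$, and extracts an explicit primal--dual triple $(y,z,w)$ from complementary slackness; the same $w$ then orients the optimal classifier. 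You instead prove the identity $\sup_{w^{\top}\Sigma w\le 1}\bigl(w^{\top}\mu-\beta\sigma_{\mcB}(w)\bigr)=\min_{\|z\|_{\mcB}\le\beta}\|\mu-z\|_{\Sigma}$ by writing $w^{\top}\mu-\beta\sigma_{\mcB}(w)=\inf_{z\in\beta\mcB}w^{\top}(\mu-z)$ and applying Sion's theorem to the bilinear form; note that compactness of the $w$-ball $\{w : w^{\top}\Sigma w\le 1\}$ (with $\Sigma\succ 0$, as the paper's use of $\Sigma^{-1}$ presumes) already suffices for the exchange, so your extra truncation of $\beta\mcB$ is a harmless belt-and-suspenders step. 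The minimax route is arguably cleaner and avoids the conic bookkeeping, at the cost of not directly handing you the optimal $w$ (you recover it a posteriori as a maximizer over the compact ball); the paper's route gives the explicit certificate, which it reuses in Section~\ref{subsec: gauss_special} to describe the optimal classifier. The remaining ingredients you cite --- admissibility of $\pm z^*$ via origin-symmetry, $\beta\mcB-\beta\mcB=2\beta\mcB$, the equal-covariance Gaussian TV formula $1-2Q(\|y^*\|_{\Sigma})$, and the degenerate case $\alpha^*=0$ where both sides trivially give loss $\tfrac12$ --- all check out.
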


The crucial properties of the solution to \eqref{convex-optimization} are characterized in the following lemma.
\begin{lemma}
  \label{lemma:slackness}
  Let $\mu \in \R^d$, $\beta \geq 0$, and $\alpha = \alpha^*(\beta,x)$. There are $y, z, w \in \R^d$ such that $y+z = \mu$ and
  \[
    \|y\|_{\Sigma} = \alpha \quad \quad \|z\|_{\mcB} = \beta \quad \quad \|w\|_{\Sigma *} = 1 \quad \quad\|w\|_{\mcB *} = \gamma \quad\quad w^{\top}y = \alpha \quad\quad w^{\top}z = \beta\gamma.
  \]
\end{lemma}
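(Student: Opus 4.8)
The plan is to read the six displayed relations off the strong‑duality certificate of the convex program \eqref{convex-optimization}, with $w$ playing the role of the Lagrange multiplier of the equality constraint $z+y=\mu$. First I would eliminate $\alpha$ and $y$: since the constraint $\|y\|_\Sigma\le\alpha$ is active at any optimum,
\[
  \alpha^*(\beta,\mu)=\min\{\|y\|_\Sigma : y+z=\mu,\ \|z\|_\mcB\le\beta\}=\min_{z\in\beta\mcB}\|\mu-z\|_\Sigma .
\]
The objective $z\mapsto\|\mu-z\|_\Sigma$ is convex and coercive and $\beta\mcB$ is closed, so this minimum is attained at some $z^\star\in\beta\mcB$; set $y^\star=\mu-z^\star$, so that $\|y^\star\|_\Sigma=\alpha:=\alpha^*(\beta,\mu)$ and $y^\star+z^\star=\mu$. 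These are the $y,z$ of the statement.

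To produce $w$, I would apply Fenchel duality to $\inf_y\bigl[\|y\|_\Sigma+\iota_{\beta\mcB}(\mu-y)\bigr]$. The conjugate of the seminorm $\|\cdot\|_\Sigma$ is the indicator of its dual unit ball $\{w:\|w\|_{\Sigma*}\le1\}$, and the conjugate of $\iota_{\beta\mcB}$ is the support function $w\mapsto\sup_{z\in\beta\mcB}w^\top z=\beta\|w\|_{\mcB*}$; since $\operatorname{dom}\|\cdot\|_\Sigma=\R^d$, the qualification for strong duality is automatic, giving
\[
  \alpha=\sup_{\|w\|_{\Sigma*}\le1}\bigl(w^\top\mu-\beta\|w\|_{\mcB*}\bigr)
\]
with the supremum attained at some $w$ (which necessarily has $\|w\|_{\mcB*}<\infty$, even when $\mcB$ is unbounded). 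Put $\gamma:=\|w\|_{\mcB*}$, so $\sup_{z\in\beta\mcB}w^\top z=\beta\gamma$.

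Then I would unwind the zero duality gap. Using $\mu=y^\star+z^\star$, $\|w\|_{\Sigma*}\le1$, and the two Hölder inequalities $w^\top y^\star\le\|w\|_{\Sigma*}\|y^\star\|_\Sigma$ and $w^\top z^\star\le\|w\|_{\mcB*}\|z^\star\|_\mcB\le\gamma\beta$,
\[
  \alpha=w^\top\mu-\beta\gamma=w^\top y^\star+\bigl(w^\top z^\star-\beta\gamma\bigr)\le\|y^\star\|_\Sigma+0=\alpha ,
\]
so every inequality in this chain is an equality. This immediately yields $w^\top y^\star=\|y^\star\|_\Sigma=\alpha$ and $w^\top z^\star=\beta\gamma$; moreover $\alpha=w^\top y^\star\le\|w\|_{\Sigma*}\alpha$ forces $\|w\|_{\Sigma*}=1$ (when $\alpha>0$), and $w^\top z^\star=\gamma\|z^\star\|_\mcB=\gamma\beta$ forces $\|z^\star\|_\mcB=\beta$ (when $\gamma>0$, i.e. $w\neq0$, which holds since $\mcB$ absorbing gives $\|w\|_{\mcB*}\ge\varepsilon\|w\|_2$ for some $\varepsilon>0$). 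That is precisely the list of six identities, with $(y,z,w)=(y^\star,z^\star,w)$.

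The conjugate/duality bookkeeping here is routine; the step I expect to require real care is the seminorm degeneracy. If $\mcB$ is unbounded then $\|\cdot\|_\mcB$ is a genuine seminorm, but the Hölder inequality and the support‑function identity above still hold verbatim and finiteness of $\|w\|_{\mcB*}$ at the dual optimum is automatic, so this causes no trouble. The one genuinely delicate point is the case $\alpha=0$: when $\mu\in\partial(\beta\mcB)$ one still succeeds by taking $z^\star=\mu$, $y^\star=0$ and scaling a supporting functional of $\beta\mcB$ at $\mu$ to unit $\Sigma{*}$‑norm, but when $\mu\in\interior(\beta\mcB)$ the only dual‑optimal multiplier is $w=0$ and $\|z^\star\|_\mcB=\|\mu\|_\mcB<\beta$, so the statement must be read under the (harmless) assumption $\mu\notin\interior(\beta\mcB)$ — equivalently, outside the trivial regime $\inf_h L(N,P,h)=\tfrac12$. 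Pinning down that a nonzero multiplier $w$ in the normal cone at a boundary optimizer $z^\star$ always exists in the relevant regime is the main obstacle; everything else is inequality manipulation.
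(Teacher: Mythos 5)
Your proposal is correct and follows essentially the same route as the paper: both form the convex dual of \eqref{convex-optimization} (the paper via an explicit conic Lagrangian, you via Fenchel conjugates after eliminating $y$ and $\alpha$, which yield the identical dual $\sup_{\|w\|_{\Sigma*}\le 1} w^\top\mu-\beta\|w\|_{\mcB*}$) and then read the six identities off complementary slackness, i.e.\ off the forced tightness of the two H\"older inequalities. Your explicit treatment of attainment and of the degenerate regime $\mu\in\interior(\beta\mcB)$ (where the only multiplier is $w=0$) is a point of care that the paper's proof passes over silently, but it does not change the argument.
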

The proof of Lemma~\ref{lemma:slackness} is in Section \ref{appsubsec: lm1_proof} of the Appendix.

\begin{figure}[t]
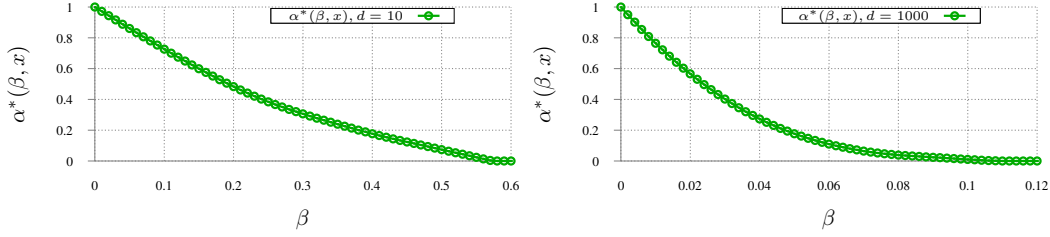

	\centering
	\subfloat{\resizebox{0.48\textwidth}{!}{\input{plots/alpha_d_10_.tex}}}
	\hspace{2mm}
	\subfloat{\resizebox{0.48\textwidth}{!}{\input{plots/alpha_d_1000_.tex}}}
	\caption{Variation in $\alpha^*$ w.r.t. $\beta$ for an $\ell_{\infty}$ adversary with $d=10$ (\textbf{left}) and $d=1000$ (\textbf{right}). \arjun{$\alpha^*$ is the point at which the primal transport problem and the dual classification problem have matching solutions, given by $1-2Q(\alpha^*)$. The classification loss at this point is simply $Q(\alpha^*)$.}}
	\label{fig: alpha_beta}
	\vspace{-10pt}
\end{figure}

\begin{proof}[Proof of Theorem~\ref{thm:gauss-opt-transport}]
  We start from the definition of optimal transport cost and consider the restricted class of ``translate and pair in place'' couplings to get an upper bound.
  In these couplings, the adversarial attacks are translations by a constant: $\tilde{X}_1 = X_1 + z$ and $\tilde{X}_{-1} = X_{-1} - z$.
  The total variation coupling between $\tilde{X}_1$ and $\tilde{X}_{-1}$ does ``pairing in place''. 
\[
(C_N \circ C_N^{\top})(P_{X_1},P_{X_{-1}}) \leq \inf_{z \in \beta \mcB} C_{TV}(P_{\tilde{X}_1}, P_{\tilde{X}_{-1}})
= \inf_{z \in \beta \mcB} \sup_{w} 2 Q \left( \frac{w^\intercal z - w ^\intercal \mu}{\sqrt{w^\intercal \Sigma w}} \right) -1.
\]
The full computation of the total variation between Gaussians is in Section \ref{appsubsec: transport_simple} of the Appendix..
The infimum is attained at $w^*=2 \Sigma^{-1} (z -\mu)$ and its value is $\sqrt{(z-\mu)^\intercal \Sigma^{-1} (z - \mu)}$.
The choice of $z$ from Lemma~\ref{lemma:slackness} makes the upper bound $2Q(-\alpha^*(\beta,\mu))-1 = 1 - 2Q(\alpha^*(\beta,\mu))$.

Now we consider the lower bounds on optimal transport cost from linear classification functions of the form $f_{w}(x)= \sgn \left(w^\intercal x \right)$.
\bcomment{
\paragraph{Classification accuracy:} We define the classification problem with respect to the classification accuracy $\bbe_{(x,y) \sim P} \left[\bm{1}(f_{w}(x)=y)\right] = \bbp_{(x,y) \sim P} \left[ f_{w}(x)=y \right] $, which also equals the standard $0-1$ loss subtracted from $1$. The aim of the learner is to maximize the classification accuracy, i.e. the classification problem is to find $w^*$ which is the solution of $\max_{w} \bbp_{(x,y) \sim P} \left[ f_{w}(x)=y \right]$.


\paragraph{Performance with adversary:}
}
In the presence of an adversary, the classification problem becomes
$
\max_{w} \bbp_{(x,y) \sim P} \left[ f_{w}(x + a_{w,y}(x) )=y \right].
$
When $y=1$, the correct classification event is $f_{w}(x + a_{w,1}(x) )=1$, or equivalently $w^\intercal x - \beta \|w\|_{\mcB *} >0$.
This ultimately gives the lower bound
\begin{align}
  (C_N \circ C_N^{\top})(P_{X_1},P_{X_{-1}})&\geq
        \sup_{w} 1-2 Q \left( \frac{\beta \|w\|_{\mcB *}  - w^\intercal \mu}{\|w\|_{\Sigma *}} \right).\label{lb}
\end{align}
The full calculation appears in the Appendix material (Section B.3).
From Lemma~\ref{lemma:slackness}, there is a choice of $w$ that makes the bound in \eqref{lb} equal to $1-2Q(\alpha^*(\beta,\mu))$.
\end{proof}

The proof of Theorem~\ref{thm:gauss-opt-transport} shows that linear classifiers are optimal for this problem.
The choice of $w$ provided by Lemma~\ref{lemma:slackness} specifies the orientation of the optimal classifier.

\subsection{Special cases} \label{subsec: gauss_special}

\noindent \textbf{Matching norms for data and adversary:} When $\mcB$ is the unit ball derived from $\Sigma$, the optimization problem \eqref{convex-optimization} has a very simple solution: $\alpha^*(\beta,\mu) = \|\mu\|_{\Sigma} - \beta$, $y = \alpha \mu$, $z = \beta \mu$, and $w = \frac{1}{\|\mu\|_{\Sigma}} \Sigma^{-1}\mu$.
Thus, the same classifier is optimal for all adversarial budgets.
In general, $\alpha^*(0,\mu) = \|\mu\|_{\Sigma}$ and $\alpha^*(\|\mu\|_{\mcB},\mu) = 0$, but $\alpha^*(\beta,\mu)$ can be nontrivially convex for $0 \leq \beta \leq \|\mu\|_{\mcB}$.
When there is a difference between the two seminorms, the optimal modification is not proportional to $\mu$, which can be used by the adversary.
The optimal classifier varies with the adversarial budget, so there is a trade-off between accuracy and robust accuracy.

\noindent \textbf{$\ell_{\infty}$ adversaries:} In Figure \ref{fig: alpha_beta}, we illustrate this phenomenon for an $\ell_{\infty}$ adversary.
We plot $\alpha(\beta,\mu)$ for $\Sigma = I$ (so $\|\cdot\|_{\Sigma} = \|\cdot\|_2$) and taking $\mcB$ to be the $\ell_{\infty}$ unit ball (so $\|\cdot\|_{\mcB} = \|\cdot\|_{\infty}$).
In this case \eqref{convex-optimization} has an explicit solution.
For each coordinate $z_i$, set $z_i = \min(\mu_i, \beta)$, which gives $y_i= \mu_i -  \min(\mu_i, \beta)$, which makes the constraints tight.
Thus, as $\beta$ increases, more components of $z$ equal those of $\mu$, reducing the marginal effect of an additional increase in $\beta$.

\arjun{Due to the mismatch between the seminorms governing the data and adversary, the value of $\beta$ determines which features are useful for classification, since features less than $\beta$ can be completely erased. Without an adversary, all of these features would be potentially useful for classification, implying that human-imposed adversarial constraints, with their mismatch from the underlying geometry of the data distribution, lead to the presence of non-robust features that are nevertheless useful for classification. A similar observation was made in concurrent work by Ilyas et al. \cite{ilyas2019adversarial}.}



\section{Gaussian data: Sample complexity lower bound}\label{sec: sample_complexity}
In this section, we use the characterization of the optimal loss in the Gaussian robust classification problem to establish the optimality of a rule for learning from a finite number of samples.
This allows for precise characterization of sample complexity in the learning problem.

Consider the following Bayesian learning problem, which generalizes a problem considered by Schmidt et al. \cite{schmidt2018adversarially}.
We start from the classification problem defined in Section~\ref{sec: gauss_data}.
There, the choice of the classifier $h$ could directly depend on $\mu$ and $\Sigma$.
Now we give $\mu$ the distribution $\mcN(\zero,\frac{1}{m}I)$.
A learner who knows this prior but not the value of $\mu$ is provided with $n$ i.i.d. labeled training examples samples.
The learner selects any measurable classification function $\hat{h}_n : \R^d \to \{-1,1\}$ by applying some learning algorithm to the training data with the goal of minimizing $\E[L(N,P,\hat{h}_n)]$.

\begin{wrapfigure}{r}{0.4\textwidth}
  \vspace{-5mm}
  \begin{tikzpicture}[scale=0.75]
    \definecolor{darkgreen}{rgb}{0, 0.5, 0};
    \draw (5.5,0) node{$t$};
    \draw (0,3.5) node{$x$};
    \draw[thick, draw=blue, fill=blue, draw opacity=0.25,fill opacity=0.25] (5,3) -- (0,1) -- (0,-1) -- (2.5,-2) -- (5,-2);
    \draw[thick, draw=blue] (5,3) -- (0,1) -- (0,-1) -- (2.5,-2);
    \draw[thick, draw=darkgreen, fill=darkgreen, fill opacity=0.25] (5,1) -- (0,1) -- (0,-1) -- (5,-1);
    \draw[thick,draw=purple, fill=purple, fill opacity=0.25] (5,2) -- (0,0) -- (5,-2);
    \draw (6,3) node{$S(\rho,\beta\rho)$};
    \draw (6,2) node{$S(\rho,0)$};
    \draw (6,1) node{$S(0,\beta\rho)$};    
    \draw[thick,<->] (-1,0) -- (5,0);
    \draw[thick,<->] (0,-2) -- (0,3);
\end{tikzpicture}
\caption{$S(\rho,\beta\rho)$ is the set appearing in the statement of Theorem~\ref{thm:bayes}. $S(\rho,0)$ corresponds to the loss lower bound obtained by Schmidt et al.. $S(0,\beta\rho)$ corresponds to the loss in the non-adversarial version of this classification problem.}
\label{fig:bayes}
\vspace{-10mm}

\end{wrapfigure}
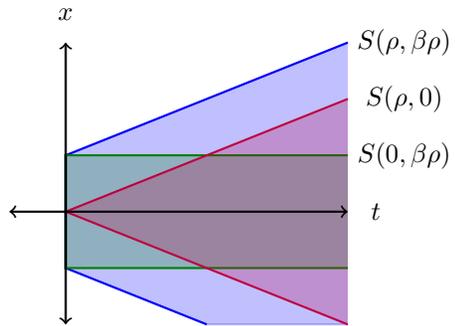

The optimal transport approach allows us to determine the exact optimal loss for this problem for each $n$ as well as the optimal learning algorithm.
To characterize this loss, we need the following definitions.
Let $\mcA$ be the $\ell_2$ unit ball: $\{y \in \R^d : \|y\|_2 \leq 1\}$.
Let $S(\alpha,\beta) = \{(x,t) \in \R^d \times \R : x \in t\alpha\mcA + \beta\mcB\}$.

\begin{theorem}
  \label{thm:bayes}
  In the learning problem described above, the minimum loss of any learning rule is
$\Pr_{V \sim \mcN(\zero,I)} \left[ V \in S(\rho, \beta \rho)\right]$, where $\rho^2 = \frac{m(m+n)}{n}$.
\end{theorem}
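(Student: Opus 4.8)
The plan is to recognize the Bayesian problem as a single instance of the adversarial-classification framework of Section~\ref{sec: optimal_transport}, on the enlarged example space $(\R^d\times\{-1,1\})^n\times\R^d$ whose elements record the $n$ training pairs together with the test point, with label $Y_{\text{Test}}$ and a neighborhood function that perturbs only the test coordinate. A learning rule composed with a classifier is exactly a hypothesis for this enlarged problem, so by the tower property $\inf_{\hat h_n}\E[L(N,P,\hat h_n)]=\E\big[\inf_{h'}L(N,P',h')\big]$, where the outer expectation is over the training sample $(\bfx,\bfy)$, $P'$ is the posterior-predictive law of $(X_{\text{Test}},Y_{\text{Test}})$ given $(\bfx,\bfy)$, and the inner infimum is over measurable $h':\R^d\to\{-1,1\}$. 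One checks here that $Y_{\text{Test}}$ remains uniform given the training sample (it is independent of it) and that a measurable minimizer exists and can be assembled into a learning rule, so the interchange of $\inf$ and $\E$ is legitimate.

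Next I would identify $P'$ by Gaussian conjugacy. The statistic $\sum_i y_i x_i$ is sufficient for $\mu$ and is $\mcN(n\mu,nI)$ given $\mu$; with the prior $\mu\sim\mcN(\zero,\tfrac1m I)$ the posterior of $\mu$ is Gaussian with covariance $\tfrac1{m+n}I$ and mean $\hat\mu=\tfrac1{m+n}\sum_i y_i x_i$, and the posterior-predictive test law is the Gaussian mixture $X_{\text{Test}}\mid(Y_{\text{Test}}=y,\bfx,\bfy)\sim\mcN(y\hat\mu,\hat\Sigma)$ for the posterior-predictive covariance $\hat\Sigma$. Since $\hat\mu$ is a linear image of the (jointly Gaussian) data, it is itself Gaussian, with covariance equal to the prior covariance minus the posterior covariance, namely $\tfrac n{m(m+n)}I$; equivalently $\hat\mu\overset{d}{=}\tfrac1\rho V$ with $V\sim\mcN(\zero,I)$ and $\rho^2=\tfrac{m(m+n)}n$.

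Now I would invoke Theorem~\ref{thm: transport} together with Theorem~\ref{thm:gauss-opt-transport}: conditionally on the training sample, $P'$ is exactly the mixture of two translated Gaussians appearing in Theorem~\ref{thm:gauss-opt-transport} (with ``$\mu$'' $=\hat\mu$ and ``$\Sigma$'' $=\hat\Sigma$), so $\inf_{h'}L(N,P',h')=Q(\alpha^\ast(\beta,\hat\mu))$, where $\alpha^\ast$ is the value of the convex program~\eqref{convex-optimization} for these parameters and the minimizing $h'$ is the linear classifier whose orientation is given by Lemma~\ref{lemma:slackness}; this also exhibits the optimal learning rule (return this classifier on input $(\bfx,\bfy)$). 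Hence the optimal Bayesian loss equals $\E_V\!\big[Q(\alpha^\ast(\beta,\tfrac1\rho V))\big]$.

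The last step is the geometric rewriting. Writing $Q(s)=\Pr_{t\sim\mcN(0,1)}[t>s]$ and using that $\alpha^\ast(\beta,\hat\mu)$ is the distance from $\hat\mu$ to $\beta\mcB$ in the seminorm induced by $\hat\Sigma$ (immediate from \eqref{convex-optimization}: $\alpha^\ast=\inf_{z\in\beta\mcB}\|\hat\mu-z\|_{\hat\Sigma}$), the event $t>\alpha^\ast(\beta,\hat\mu)$ holds exactly when $\hat\mu\in t\,\mcA_{\hat\Sigma}+\beta\mcB$, where $\mcA_{\hat\Sigma}$ is the $\hat\Sigma$-unit ball and only $t\ge0$ contributes. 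Rescaling by $\hat\mu=\tfrac1\rho V$ and absorbing the scalings of $\mcA_{\hat\Sigma}$ and of $\beta\mcB$ so that the combined Gaussian vector $(V,t)$ is standard, this event becomes membership in $S(\rho,\beta\rho)$, giving $\inf_{\hat h_n}\E[L(N,P,\hat h_n)]=\Pr_{V\sim\mcN(\zero,I)}[V\in S(\rho,\beta\rho)]$. The main obstacle is precisely this bookkeeping: tracking the two variance contributions (posterior uncertainty about $\mu$ and intrinsic test noise, entering through $\hat\Sigma$) so that they collapse into the single parameter $\rho$, handling the one-sided ($t\ge0$) subtlety, and matching the seminorm ball $\mcA_{\hat\Sigma}$ with the $\ell_2$ ball $\mcA$. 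Everything else is a direct appeal to Theorems~\ref{thm: transport}--\ref{thm:gauss-opt-transport} or a routine Gaussian computation.
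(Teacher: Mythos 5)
Your proposal follows the paper's proof essentially step for step: condition on the training sample, observe that the learner then faces the fixed-parameter Gaussian problem of Section~\ref{sec: gauss_data} with mean $\hat\mu=\frac{1}{m+n}\sum_i Y_iX_i$, invoke Theorem~\ref{thm:gauss-opt-transport} to get conditional loss $Q(\alpha^*(\beta,\hat\mu))$, compute $\hat\mu\sim\mcN(\zero,\frac{n}{m(m+n)}I)$ by conjugacy, and average by introducing an independent standard normal $T$ so that the expectation becomes a Gaussian measure of $S(\cdot,\cdot)$. The structure, the key lemma used, and the final rewriting are all the same as the paper's.

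The one place where you are more careful than the paper is also the one step that, as written, does not land on the stated constant. You (correctly) keep the full posterior-predictive covariance $\hat\Sigma$, which is $\bigl(1+\frac{1}{m+n}\bigr)I$ — posterior uncertainty in $\mu$ plus the intrinsic test noise — and then assert that the two variance contributions ``collapse into the single parameter $\rho$.'' Carrying that bookkeeping through gives $\alpha^*(\beta,\hat\mu)=\sqrt{\tfrac{m+n}{m+n+1}}\,d_2(\hat\mu,\beta\mcB)$ and hence a final answer $\Pr[V\in S(\rho',\beta\rho)]$ with $(\rho')^2=\frac{m(m+n+1)}{n}$, not $\rho^2=\frac{m(m+n)}{n}$. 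The paper's proof sidesteps this by asserting that $X_{n+1}Y_{n+1}$ given the data is $\mcN(\hat\mu,I)$, i.e.\ by dropping the $\frac{1}{m+n}I$ posterior-uncertainty term from the predictive law; that simplification is exactly what produces the $\rho$ in the theorem statement. So you must either adopt that same identification (in which case your argument coincides with the paper's) or accept a slightly different constant; the ``collapse'' you defer to bookkeeping is the one claim that needs to be resolved rather than assumed. Everything else in your outline — the tower-property interchange, measurable selection of the optimal linear rule from Lemma~\ref{lemma:slackness}, and the one-sidedness of $t$ (harmless since $\alpha^*\geq 0$) — is fine and consistent with the paper.
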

The proof is in Section \ref{appsec: thm3_proof} of the Appendix.

The special case where $\mcB$ is an $\ell_{\infty}$ ball was considered by Schmidt et al. \cite{schmidt2018adversarially}.
They obtained a lower bound on loss that can be expressed in our notation as $\Pr[V \in S(0,\rho\beta)]$.
This bound essentially ignores the random noise in the problem and computes the probability that after seeing $n$ training examples, the posterior distributions for $X_{n+1}|(Y_{n+1} = 1)$ and $X_{n+1}|(Y_{n+1} = -1)$ are adversarially indistinguishable.
The true optimal loss takes into account the intermediate case in which these posterior distributions are difficult but not impossible to distinguish in the presence of an adversary.

Schmidt et al. investigate sample complexity in the following parameter regime: $m = c_1 d^{\frac{1}{2}}$ which by design is a low noise regime.
In this regime, they establish upper and lower bounds on sample complexity of learning an adversarially robust classifier: $C \frac{\beta^2 d}{\log d} \leq n \leq C' \beta^2 d$.
By taking into account the effect of the random noise, our characterization of the loss loses this gap.
For larger values of $m$, the difference between $\Pr[Y \in S(0,\rho\beta)]$ and $\Pr[Y \in S(\rho,\rho\beta)]$ becomes more significant, so our analysis is useful over a much broader range of parameters.

\bcomment{
  Schmidt et al. investigate sample complexity in the following parameter regime: $m = c_1 d^{\frac{1}{2}}$.
Here and in the following discussion, $C$ is a constant the does not depend on $d$ and which can change from line to line.
Observe that by design this is a low noise regime.
When $n = 1$, $\rho = C d^{\frac{1}{2}}$, 
We have $\E[\|V\|_2] = (d+1)^{\frac{1}{2}}$, so $\Pr\left[V \in S\left(\rho,0\right)\right]$, which is the probability of error without an adversary, is a constant that can be made small when $c_1$ is small. In this low noise regime, Schmidt et al. establish upper and lower bounds on sample complexity of learning an adversarially robust classifier: $C \frac{\beta^2 d}{\log d} \leq n \leq C' \beta^2 d$.
Even here, failing to consider the effect of the random noise prevents full determination of the sample complexity.
For larger values of $m$, the difference between $\Pr[Y \in S(0,\rho\beta)]$ and $\Pr[Y \in S(\rho,\rho\beta)]$ becomes more significant.

  Because $\E[\|V\|_{\infty}] \geq C \sqrt{\log d}$, the lower bound $\Pr\left[V \in S\left(0, \beta \rho\right)\right]$ is bounded away from $\frac{1}{2}$ when $\beta \rho \geq C \sqrt{\log d}$, which is when $n \geq C \frac{\beta^2 d}{\log d}$.
Schmidt et al. note that this factor of $\log d$ does not appear in their upper bound.

By using $\Pr\left[V \in S\left(\rho, \beta \rho\right)\right]$ instead, we can uncover the true sample complexity.
Split $V$ into $(X,T) \in \R^d \times \R$.
The probability that $T \geq $ is $1 - $.
}


\section{Experimental Results}\label{sec: experiments}
In this section, we use Theorem \ref{thm: transport} to find lower bounds on adversarial robustness for empirical datasets of interest. We also compare these bounds to the performance of robustly trained classifiers on adversarial examples and find a gap for larger perturbation values. For reproducibility purposes, our code is available at \url{https://github.com/inspire-group/robustness-via-transport}.

\subsection{Experimental Setup}
We consider the adversarial classification problem on three widely used image datasets, namely MNIST \cite{lecun1998mnist}, Fashion-MNIST \cite{xiao2017/online} and CIFAR-10 \cite{krizhevsky2009learning}, and obtain lower bounds on the adversarial robustness for any classifier for these datasets. For each dataset, we use data from classes 3 ($P_{X_1}$) and 7 ($P_{X_{-1}}$) to obtain a binary classification problem. This choice is arbitrary and similar results are obtained with other choices, which we omit for brevity. We use 2000 images from the \emph{training set} of each class to compute the lower bound on adversarial robustness when the adversary is constrained using the $\ell_2$ norm. For the $\ell_{\infty}$ norm, these pairs of classes are very well separated, making the lower bounds less interesting (results in Section \ref{appsec: linf_adv} of the Appendix).

For the MNIST and Fashion MNIST dataset, we compare the lower bound with the performance of a 3-layer Convolutional Neural Network (CNN) that is robustly trained using iterative adversarial training \cite{madry_towards_2017} with the Adam optimizer \cite{kingma2014adam} for 12 epochs. This network achieves 99.9\% accuracy on the `3 vs. 7' binary classification task on both MNIST and Fashion-MNIST. For the CIFAR-10 dataset, we use a ResNet-18 \cite{he2016deep} trained for 200 epochs, which achieves 97\% accuracy on the binary classification task. To generate adversarial examples both during the training process and to test robustness, we use Projected Gradient Descent (PGD) with an $\ell_2$ constraint, random initialization and a minimum of 10 iterations. Since more powerful heuristic attacks may be possible against these robustly trained classifiers, the `robust classifier loss' reported here is a lower bound.

\subsection{Lower bounds on adversarial robustness for empirical distributions}
Now, we describe the steps we follow to obtain a lower bound on adversarial robustness for empirical distributions through a direct application of Theorem \ref{thm: transport}. We first create a $k \times k$ matrix $D$ whose entries are $\|x_i-x_j\|_p$, where $k$ is the number of samples from each class and $p$ defines the norm. Now, we threshold these entries to obtain $D_{\text{thresh}}$, the matrix of adversarial costs $(c_N \circ c_N^{\top})(x_i,x_j)$ (recall Section \ref{subsec: adv_cost}), whose $(i,j)^{\text{th}}$ entry is $1$ if $D_{ij}>2\beta$ and $0$ otherwise, where $\beta$ is the constraint on the adversary. Finally, optimal coupling cost  $(C_N \circ C_N^{\top})(P_{X_1},P_{X_{-1}})$ is computed by performing minimum weight matching over the bipartite graph defined by the cost matrix $D_{\text{thresh}}$ using the Linear Sum Assignment module from Scipy \cite{scipyref}.

\begin{figure}[t]
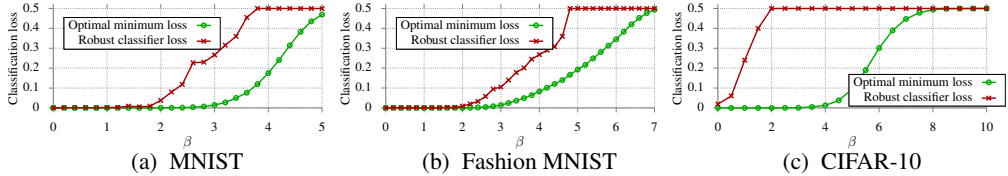

	\centering
	\subfloat[ MNIST]{\resizebox{0.31\textwidth}{!}{\input{plots/3_7_mnist_l2_2000_mA_large_.tex}}\label{subfig: mnist}}
	\hspace{0mm}
	\subfloat[ Fashion MNIST]{\resizebox{0.31\textwidth}{!}{\input{plots/3_7_fmnist_l2_2000_mA_large_.tex}}\label{subfig: fmnist}}
	\hspace{0mm}
	\subfloat[ CIFAR-10]{\resizebox{0.31\textwidth}{!}{\input{plots/3_7_cifar_l2_r18_.tex}}\label{subfig: cifar-10}}
	\caption{Variation in minimum $0-1$ loss (adversarial robustness) as $\beta$ is varied for `3 vs. 7'. For all datasets, the loss of a robustly classifier (trained with iterative adversarial training \cite{madry_towards_2017}) is also shown for a PGD adversary with an $\ell_2$ constraint.}
	\label{fig: compare_plot}
	\vspace{-10pt}
\end{figure}

In Figure \ref{fig: compare_plot}, we show the variation in the minimum possible $0-1$ loss (adversarial robustness) in the presence of an $\ell_2$ constrained adversary as the attack budget $\beta$ is increased. We compare this loss value to that of a robustly trained classifier \cite{madry_towards_2017} when the PGD attack is used (on the same data). Until a certain $\beta$ value, robust training converges and the model attains a non-trivial adversarial robustness value. Nevertheless, there is a gap between the empirically obtained and theoretically predicted minimum loss values. Further, after $\beta=3.8$ (MNIST), $\beta=4.8$ (Fashion MNIST) and $\beta=1.5$, we observe that robust training is unable to converge. We believe this occurs as a large fraction of the data at that value of $\beta$ is close to the boundary when adversarially perturbed, making the classification problem very challenging. 

We note that in order to reduce the classification accuracy to random for CIFAR-10, a much larger $\ell_2$ budget is needed compared to either MNIST or Fashion-MNIST, implying that the classes are better separated.


\section{Related work and Concluding Remarks}\label{sec: rel_work}
We only discuss the closest related work that analyzes evasion attacks theoretically. Extensive recent surveys \cite{papernot2016towards,liu2018survey,biggio2017wild} provide a broader overview.

\noindent \textbf{Distribution-specific generalization analysis:} Schimdt et al. \cite{schmidt2018adversarially} studied the sample complexity of learning a mixture of Gaussians as well as Bernoulli distributed data in the presence of $\ell_{\infty}$-bounded adversaries, \arjun{which we recover as a special case of our framework in \ref{sec: sample_complexity}}. Gilmer et al. \cite{gilmer2018adversarial} and Diochnos et al. \cite{diochnos2018adversarial} analyzed the robustness of classifiers for specific distributions, i.e. points distributed on two concentric spheres and points on the Boolean hypercube respectively. \arjun{In contrast to these papers, our framework applies for any binary classification problem as our lower bound applies to arbitrary distributions}.

\noindent \textbf{Sample complexity in the PAC setting:} Cullina et al. \cite{cullina2018pac}, Yin et al. \cite{yin2018rademacher} and Montasser et al. \cite{montasser2019vc} derive the sample complexity needed to PAC-learn a hypothesis class in the presence of an evasion adversary. \arjun{ These approaches do not provide an analysis of the optimal loss under a given distribution, but only of the number of samples needed to get $\epsilon$-close to it, i.e. to learn the best empirical hypothesis.}

\noindent \textbf{Optimal transport for bounds on adversarial robustness:} \arjun{Sinha et al. \cite{sinha2017certifiable} constrain the adversary using a Wasserstein distance bound on the distribution that results from perturbing the benign distribution} and study the sample complexity of SGD for minimizing the relaxed Lagrangian formulation of the learning problem with this constraint. In contrast, we use a cost function that characterizes sample-wise adversarial perturbation exactly, \arjun{which aligns with current practice} and provide a lower bound on the $0-1$ loss with an adversary, while Sinha et al. minimize an upper bound to perform robust training. \arjun{Mahloujifar et al. \cite{mahloujifar2019curse} and Dohmatob \cite{pmlr-v97-dohmatob19a} use the `blowup' property exhibited by certain data distributions to provide bounds on adversarial risk, given some level of ordinary risk. In comparison, our assumptions on the example space, distribution, and adversarial constraints are much milder.
Even in regimes where these frameworks are applicable, our approach provides two key advantages. First, our bounds explicitly concern the adversarial robustness of the optimal classifier, while theirs relate the adversarial robustness to the benign classification error of a classifier.
Thus, our bounds can still be nontrivial even when there is a classifier with a benign classification error of zero, which is exactly the case in our MNIST experiments. Second, our bounds apply for any adversarial budget while theirs become non-trivial only when the adversarial budget exceeds a critical threshold depending on the properties of the space.}

\noindent \textbf{Possibility of robust classification:} Bubeck et al. \cite{bubeck2018adversarial}  show that there exist classification tasks in the statistical query model for which there is no efficient algorithm to learn robust classifiers. Tsipras et al. \cite{tsipras2018there},  Zhang et al. \cite{zhang2019theoretically} and Suggala et al. \cite{suggala2019revisiting} study the trade-offs between robustness and accuracy. We discuss this trade-off for Gaussian data in Section \ref{sec: gauss_data}.

\subsection{Concluding remarks} 
Our framework provides lower bounds on adversarial robustness through the use of optimal transport for binary classification problems, which we apply to empirical datasets of interest to analyze the performance of current defenses. In future work, we will extend our framework to the multi-class classification setting. As a special case, we also characterize the learning problem exactly in the case of Gaussian data and study the relationship between noise in the learning problem and adversarial perturbations. Recent work \cite{fawzi2016robustness,ford2019adversarial} has established an empirical connection between these two noise regimes and an interesting direction would be to precisely characterize which type of noise dominates the learning process for a given adversarial budget. Another natural next step would be to consider distributions beyond the Gaussian to derive expressions for optimal adversarial robustness as well as the sample complexity of attaining it.

\subsection*{Acknowledgements}
We would like to thank Chawin Sitawarin for providing part of the code used in our experiments.
This research was sponsored by the National Science Foundation under grants CNS-1553437, CNS1704105, CIF-1617286 and EARS-1642962, by Intel through the Intel Faculty Research Award, by the Office of Naval Research through the Young Investigator Program (YIP) Award, by the Army Research Office through the Young Investigator Program (YIP) Award and a Schmidt DataX Award. ANB would like to thank Siemens for supporting him through the FutureMakers Fellowship.

{\small
\bibliographystyle{plain}
\bibliography{neurips_2019}}

\appendix

\section{Proof of Theorem 1} \label{appsec: thm1_proof}

First, we present an easy lemma that uses our topological conditions on the neighborhood function.
\begin{definition}
  A binary relation $R \subset \mcX \times \mcY$, or equivalently a set-valued function $\mcX \to 2^{\mcY}$, is upper hemicontinuous if it has the following property.
  For all open sets $V \subseteq \mcY$ and points $x \in \mcX$ such that $R(x) \subseteq S$, $x$ has an open neighborhood $U$ such that $R(U) \subseteq V$.
  Equivalently, $R^T(\mcY \setminus V)$ is closed.
\end{definition}

\begin{lemma}\label{lemma: continuity}
  Suppose that the adversarial constraint function $N$ is upper hemicontinuous, and $N(x)$ is nonempty and closed for all $x \in \mcX$.
  Then the cost function $c_N \circ c_N^T$ is lower semicontinuous.  
\end{lemma}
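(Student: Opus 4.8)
The plan is to reduce the statement to a single topological fact about $N$ and then establish that fact directly. Recall from the combinatorial identity noted just before the lemma that $(c_N \circ c_N^{\top})(x,x') = \one[N(x) \cap N(x') = \varnothing]$. Since this function is $\{0,1\}$-valued, it is lower semicontinuous if and only if the set
\[
  D = \{(x,x') \in \mcX \times \mcX : N(x) \cap N(x') \neq \varnothing\}
\]
is closed — equivalently, the set on which the cost equals $1$ is open. So the whole task is to show that $D$ is closed, i.e. that $\mcX \times \mcX \setminus D$ is open.

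To do this I would fix $(x,x')$ with $N(x) \cap N(x') = \varnothing$ and produce a product neighborhood of it avoiding $D$. Since $N(x)$ and $N(x')$ are closed by hypothesis and $\tilde{\mcX}$ is Polish, hence metrizable, hence normal, there are disjoint open sets $V,V' \subseteq \tilde{\mcX}$ with $N(x) \subseteq V$ and $N(x') \subseteq V'$. Applying upper hemicontinuity of $N$ at $x$ with the open set $V$, and at $x'$ with $V'$, gives open neighborhoods $U \ni x$ and $U' \ni x'$ with $N(U) \subseteq V$ and $N(U') \subseteq V'$. Then every $(a,a') \in U \times U'$ satisfies $N(a) \subseteq V$ and $N(a') \subseteq V'$, and since $V \cap V' = \varnothing$ we get $N(a) \cap N(a') = \varnothing$, i.e. $(a,a') \notin D$. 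Thus $U \times U'$ is a neighborhood of $(x,x')$ in the complement of $D$, so $D$ is closed and $c_N \circ c_N^{\top}$ is lower semicontinuous.

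I do not expect a serious obstacle here, but the place where one could go wrong is the choice of argument: a sequential approach — taking $(x_n,x'_n) \to (x,x')$ with witnesses $\tilde x_n \in N(x_n) \cap N(x'_n)$ and trying to pass to a limit — appears to need compactness of the values $N(x)$ in order to extract a convergent subsequence of the $\tilde x_n$, which is not assumed. Phrasing everything in terms of open sets sidesteps this: one never needs the witnesses to converge, only that they are eventually forced into the empty intersection $V \cap V'$. Each hypothesis is used exactly once — closedness of $N(x)$ and $N(x')$ to invoke normality of $\tilde{\mcX}$, and upper hemicontinuity to transport the separating open sets back to a neighborhood in $\mcX \times \mcX$ — while nonemptiness of $N(x)$, although assumed for Theorem~\ref{thm: transport}, is not needed for this lemma.
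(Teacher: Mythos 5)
Your proof is correct and is essentially identical to the paper's: both separate the disjoint closed sets $N(x)$ and $N(x')$ by normality of $\tilde{\mcX}$ and then use upper hemicontinuity to pull the separating open sets back to a product neighborhood $U \times U'$ on which the cost is identically $1$. Your added remarks (why a sequential argument would need compactness, and that nonemptiness is not used) are accurate but do not change the argument.
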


\begin{proof}
  For each point $(x,x')$ such that $(c_N \circ c_N^T)(x,x') = 1$, we will find an open neighborhood with the same cost.
  Thus $c_N \circ c_N^T$ is the indicator function of an open set and is lower semicontinuous.

  The sets $N(x)$ and $N(x')$ must be disjoint because $(c_N \circ c_N^T)(x,x') = 1$.
  They are closed, and $\mcX'$ is a normal space, so they have disjoint open neighborhoods $V$ and $V'$.
  Because $N$ is upper hemicontinuous, $x$ and $x'$ have open neighborhoods $U$ and $U'$ such that $R(U) \subseteq V$ and $R(U') \subseteq V'$.
  Because $V$ and $V'$ are disjoint, $c_N \circ c_N^T$ is one everywhere in $U \times U'$.
\end{proof}

For the proof of Theorem~\ref{thm: transport} we need to use the concept of a cyclically monotone set \cite{villani2008optimal}.
\begin{definition}
  A subset $\Gamma \subseteq \mcX \times \mcY$ is said to be $c$-cyclically monotone if, for all $n \in \N$ and all families of points $(x,y) \in \Gamma^n \subseteq \mcX^n \times \mcY^n$,
  \[
    \sum_{i=0}^{n-1} c(x_i,y_i) \leq \sum_{i=0}^{n-1} c(x_i,y_{i+1})
  \]
  (with the convention $y_n = y_0$ ).
\end{definition}

\begin{proof}[Proof of Theorem~\ref{thm: transport}]
  Abbreviate $c_N \circ c_N^T$ as $c$.
  From Lemma~\ref{lemma: continuity}, the cost function $c$ is lower-semicontinuous.
  From Theorem 5.10 (ii), there is a set $\Gamma \subseteq \mcX \times \mcX$ that is measureable, is $c$-cyclically monotone, and such that every optimal coupling is concentrated on it.


  We need to find $f,g : \mcX \to \R$ such that $c(x,y) \geq g(y) - f(x)$ everywhere and $c(x,y) \leq g(y) - f(x)$ for $(x,y) \in \Gamma$.
  The former property means that $f$ and $g$ are admissible potentials and the latter means that they are optimal in the dual transportation problem.
A classifier $h$ can be constructed from any pair of admissible $\{0,1\}$-valued potentials.
  
  For all $i \geq 0$, let
  \begin{align*}
    A_0 &= \{x \in \mcX : \exists y' \in \mcX \text{ s.t. } c(x,y') = 1, (x,y') \in \Gamma\}\\
    A'_{i+1} &= \{y' \in \mcX : \exists x \in A_i \text{ s.t. } c(x,y') = 0\}\\
    A_{i+1} &= \{x \in \mcX : \exists y' \in A'_{i+1} \text{ s.t. } c(x,y') = 0, (x,y') \in \Gamma\}\\
    B_0 &= \{y \in \mcX : \exists x' \in \mcX \text{ s.t. } c(x',y) = 1, (x',y) \in \Gamma\}\\
    B'_{i+1} &= \{x' \in \mcX : \exists y \in B_i \text{ s.t. } c(x',y) = 0\}\\
    B_{i+1} &= \{y \in \mcX : \exists x' \in B'_{i+1} \text{ s.t. } c(x',y) = 0, (x',y) \in \Gamma\}\\
  \end{align*}
  Further define \(A = \cup_{i \geq 0} A_i \), \(A' = \cup_{i \geq 1} A'_i \), \(B = \cup_{i \geq 0} B_i \), and \(B' = \cup_{i \geq 1} B'_i\).
  Observe that
  \( A' = \{y \in \mcX : \exists x \in A \text{ s.t. } c(x,y) = 0\} \)
  and
  \( B' = \{x \in \mcX : \exists y \in B \text{ s.t. } c(x,y) = 0\} \).
  If we let $g(y) = \ind(B)$, then $f(x) = \ind(B') = \sup_y g(y) - c(x,y)$, i.e. the largest function such that $g(y) - f(x) \leq c(x,y)$ everywhere.
  Alternative choices for $f$ and $g$ come from $A$ and $A'$.
  If we let $f(x) = 1 - \ind(A)$, then $g(y) = 1 - \ind(A') = \inf_x f(x) + c(x,y)$.
  
  For all $x \in A$, there is some $j$ and sequences $(x_0, \cdots, x_{j-1})$ and $(y'_0,\cdots, y'_{j-1})$ such that $x_{j-1} = x$, $x_i \in A_i$, and $y'_{i+1} \in A'_{i+1}$ that witness this.
  Similarly, for all $y \in B$, there is some $k$ and sequences $(x'_0, \cdots, x'_{k-1})$ and $(y_0,\cdots, y_{k-1})$ such that $y_{k-1} = y$, $y_i \in B_i$, and $x'_i \in B'_i$.
  Now we have
  \[
    \sum_{i=0}^{j-1} c(x_i,y'_i) + \sum_{i=0}^{k-1}  c(x'_i,y_i) = 2
  \]
  and
  \[
    \sum_{i=1}^{j-1} c(x_{i-1},y'_i) + \sum_{i=1}^{k-1} c(x'_{i-1},y_i) + c(x'_0,y'_0) + c(x_{j-1},y_{k-1}) = c(x'_0,y'_0) + c(x_{j-1},y_{k-1}).
  \]
  From the cyclic monotonicity of $\Gamma$ and the fact that $c$ is always at most 1, $c(x'_0,y'_0) = c(x_{j-1},y_{k-1}) = 1$.
  Thus $c(x,y) = 1$ for all $(x,y) \in A \times B$.
  This means that $A$ and $B'$ are disjoint and $B$ and $A'$ are disjoint.

  Now consider some $(x,y) \in \Gamma$.
  If $c(x,y) = 1$, then $x \in A_0$, $y \in B_0$, so $(x,y) \in A \times B$.
  If $c(x,y) = 0$, $(x,y)$ is in one of $A \times A'$, $B' \times B$, or $(\mcX \setminus A \setminus B') \times (\mcX \setminus A' \setminus B)$.
  We can now easily check that for $g(y) = \ind(B)$ and $f(x) = \ind(B')$, $g(y) - f(x) = c(x,y)$ everywhere in $\Gamma$.
  The choices $g(y) = \ind(\mcX \setminus A')$ and $f(x) = \ind(\mcX \setminus A)$ work similarly.

  Finally, we have
  \begin{align*}
    &\eq \E[g(X_{-1}) - f(X_1)]\\
    &= \Pr[\tilde{h}(X_{-1}) = -1] - \Pr[\tilde{h}(X_1) \neq 1]\\
    &= 1 - \Pr[\tilde{h}(X_1) \neq 1] - \Pr[\tilde{h}(X_{-1}) \neq -1]\\
    &= 1 - \Pr[h(\tilde{X}_1) \neq 1] - \Pr[h(\tilde{X}_{-1}) \neq -1]\\
    &= 1 - 2 L(N,h,P).
  \end{align*}

\end{proof}


\section{Full Proof of Theorem 2} \label{appsec: thm2_proof}
For a closed convex ball $\mcB \subseteq \R^d$, define the  cone $\mcC_{\mcB} \subseteq \R^{d+1}$, $\mcC_{\mcB} = \{(z,\alpha) : \alpha \geq 0, z \in \alpha \mcB\}$.
Observe that $\mcC_{\mcB}$ is convex and for $c \geq 0$, $(z,\alpha) \in \mcC_{\mcB}$ implies $(cz,c\alpha) \in \mcC_{\mcB}$.
Thus $\mcC_{\mcB}$ is indeed a cone.
From this, define the norm $\|z\|_{\mcB} = \min \{\alpha : (z,\alpha) \in \mcC_{\mcB}\}$.
Thus $\mcC_{\mcB} = \{(z,\alpha) : \|z\|_{\mcB} \leq \alpha\}$.

For a cone $\mcC \subseteq \R^d$, the definition of the dual cone is $\mcC^* = \{y \in \R^d : y^{\top}x \geq 0 \;\forall x \in \mcC\}$.
A pair $(w,\gamma) \in \mcC_{\mcB}^*$ if and only if $w^{\top}z + \alpha \gamma \geq 0$ for all $(z,\alpha) \in \mcC_{\mcB}$.
It is enough to check the pairs $(z, \|z\|_{\mcB})$, which gives the condition $-w^{\top}z \leq \|z\| \gamma$.

This is very close to the ordinary definition of the dual norm.
However, when $\mcB$ is not symmetric, the minus sign matters.
If $\zero \in \mcB$, then $(\zero,1) \in \mcC_{\mcB}$ and the constraint $\gamma \geq 0$ applies to $\mcC_{\mcB}^*$.
However, if $\zero \not\in \mcB$, $\mcC_{\mcB}^*$ with contain points with negative $\gamma$ components.
In this case, there is no interpretation as a norm.

\bcomment{
	The dual ball is $\mcB_* = \{y \in \R^d : x^{\top} y \leq 1 \;\forall x \in \mcB\}$.
	The dual cone $\mcC_{\mcB*}$ comes from the dual norm $\|\cdot\|_{\mcB*}$.
	For $(z,\alpha) \in \mcC_{\mcB}$ and $(w,\gamma) \in \mcC_{\mcB *}$, $w^{\top}z + \alpha \gamma \geq 0$.
	This is because $-w^{\top}z \leq \|{-w}\|_{\mcB*}\|z\|_{\mcB} = \|w\|_{\mcB*}\|z\|_{\mcB} \leq \gamma \alpha$.
	Note that we are using the symmetry property of the norm and the ball here.
}
\subsection{Proof of Lemma 1} \label{appsubsec: lm1_proof}
Consider the following convex program:
\begin{align*}
	(z,\alpha,y,\beta) \in \R^{d+1+d+1}\\
	\min a \alpha + b \beta&\\
	(z,\alpha,y,\beta) &\in \mcC_{\mcB} \times \mcC_{\Sigma}\\
	z + y &= \mu  
\end{align*}
The cone constraint is equivalent to $\|z\|_{\mcB} \leq \alpha$ and $\|y\|_{\Sigma} \leq \beta$.
The equality condition is equivalent to $\mu - z - y \in \{\zero\}$, the trivial cone. 

The Lagrangian is 
\begin{align*}
	L
	&= a \alpha + b \beta - w^{\top}(z+y-\mu)\\
	&=
	\begin{pmatrix}
		\zero^{\top} & a & \zero^{\top} & b
	\end{pmatrix}
	\begin{pmatrix}
		z \\ \alpha \\ y \\ \beta  
	\end{pmatrix}
	- w^{\top}
	\begin{pmatrix}
		I & \zero & I & \zero
	\end{pmatrix}
	\begin{pmatrix}
		z \\ \alpha \\ y \\ \beta  
	\end{pmatrix}
	+ w^{\top} \mu
\end{align*}

The dual is
\begin{align*}
	w \in \R^{d}\\
	\max \mu^Tw\\
	(-w,a,-w,b) &\in \mcC_{\mcB}^* \times \mcC_{\Sigma}^*\\
\end{align*}
The cone constraint on $w$ is trivial because the dual of $\{\zero\}$ is all of $\R^d$.

If we change the objective of the first program to use a hard constraint on $\alpha$ instead of including it in the objective, the new primal is
\begin{align*}
	(z,\alpha,y,\beta) \in \R^{d+1+d+1}\\
	\min b \beta&\\
	(z,\alpha,y,\beta) &\in \mcC_{\mcB} \times \mcC_{\Sigma}\\
	z + y &= \mu  \\
	\alpha &\leq \alpha'
\end{align*}
the new Lagrangian is
\[
L = b \beta - w^{\top}(z+y-\mu) - \eta(\alpha' - \alpha).
\]
The new dual is 
\begin{align*}
	(w,\eta) \in \R^{d+1}\\
	\max \mu^Tw - \alpha' \eta\\
	\eta &\geq 0\\
	(-w,\eta,-w,b) &\in \mcC_{\mcB}^* \times \mcC_{\Sigma}^*.
\end{align*}

Rewriting without any cone notation, combining $\alpha$ with $\alpha'$, and specializing to $b=1$, we have
\begin{align*}
	(z,y,\beta) \in \R^{d+d+1}\\
	\min \beta&\\
	\|z\|_{\mcB} &\leq \alpha\\
	\|y\|_{\Sigma} &\leq \beta\\
	z + y &= \mu  
\end{align*}
and
\begin{align*}
	(w,\eta) \in \R^{d+1}\\
	\max \mu^Tw - \alpha \eta\\
	\eta &\geq 0\\
	\|{-w}\|_{\mcB}^* &\leq \eta\\
	\|{-w}\|_{\Sigma}^* &\leq 1
\end{align*}

From complementary slackness we have $-w^{\top}z+\eta\alpha = 0$ and $-w^{\top}y+b\beta = 0$.
From the constraints, we have $\|z\|_{\mcB} \leq \alpha$, $\|y\|_{\mcB} \leq \beta$,   $\|{-w}\|_{\mcB}^* \leq \eta$, and $\|{-w}\|_{\Sigma}^* \leq b$.
We have $w^{\top}z \leq \|w\|_{\mcB}^* \|z\|_{\mcB}$ and $w^{\top}y \leq \|w\|_{\Sigma}^* \|y\|_{\Sigma}$.
Combining these, all six inequalities are actually equalities.

\bcomment{
	
	\begin{lemma}
		Let $\lambda = \lambda^*(\alpha)$ and let $y$ and $z$ be solutions to $\mu = \alpha z + \lambda y$,  $\|y\|_{\Sigma} \leq 1$, $\|z\|_{\mcB} \leq 1$.
		Then $\|y\|_{\Sigma} = 1$, $\|z\|_{\mcB} = 1$, and there is some $w \in \R^d$ such that $\|w\|_{\mcB *} = \|w\|_{\Sigma *} = 1$,
		$w^{\top}z = \alpha$ and $w^{\top}y = \lambda$.
	\end{lemma}
	\begin{proof}
		Consider the optimization problem
		\begin{align}
			&\max w^{\top}\mu\\
			\text{s.t. }& \alpha \|w\|_{\mcB *} \leq 1\\
			&\lambda \|w\|_{\Sigma *} \leq 1.
			\label{dual}
		\end{align}
		This is the optimization of a linear objective over a closed convex set.
		Any optimizing $w$ will have the properties that we seek.
		
		We have $w^{\top}\mu = \alpha w^{\top} z + \lambda w^{\top} y \leq 1$ where the inequality follows from the constraints of \eqref{dual}.
		
		\begin{align*}
			\max w^{\top}\mu&\\
			\text{s.t. } \|w\|_{\mcB *} &\leq a\\
			\|w\|_{\Sigma *} &\leq b\\
			a &\leq 1\\
			b &\leq 1
		\end{align*}
		
		\begin{align*}
			\min &\\
			\|z\|_{\mcB} &\leq c\\
			\|y\|_{\Sigma} &\leq d\\
		\end{align*}

	\end{proof}
}

\subsection{Simplification of transportation problem} \label{appsubsec: transport_simple}

From Theorem\ref{thm: transport}, 
\begin{align}
	C_N \circ C_N^{\top}(P_{X_1},P_{X_{-1}}) &\leq \inf_{z \in \beta \mcB} C_{TV}(\tilde{P}_{X_1}, \tilde{P}_{X_{-1}}), \\
	& = \inf_{z \in \beta \mcB} \sup_A \tilde{P}_{X_1}(A) - \tilde{P}_{X_{-1}}(A),\\
	& = \inf_{z \in \beta \mcB} \sup_{w} \bbe_{x \sim \mcN(\mu -z, \Sigma)} \left[ \bm{1} (w^\intercal x >0) \right] - \bbe_{x \sim \mcN(-\mu +z, \Sigma)} \left[ \bm{1} (w^\intercal x >0) \right] \\
	&= \inf_{z \in \beta \mcB} \sup_{w} Q \left( \frac{w^\intercal z - w ^\intercal \mu}{\sqrt{w^\intercal \Sigma w}} \right) - Q \left( \frac{w ^\intercal \mu - w^\intercal z}{\sqrt{w^\intercal \Sigma w}} \right), \\
	&= \inf_{z \in \beta \mcB} \sup_{w} 2 Q \left( \frac{w^\intercal z - w ^\intercal \mu}{\sqrt{w^\intercal \Sigma w}} \right) -1.
\end{align}
As before, since the $Q$-function decreases monotonically, its supremum is obtained by finding $\inf_{w} \frac{w^\intercal z - w ^\intercal \mu}{\sqrt{w^\intercal \Sigma w}}$. The infimum is attained at $w^*=2 \Sigma^{-1} (z -\mu)$ and its value is $\sqrt{(z-\mu)^\intercal \Sigma^{-1} (z - \mu)}$, which implies that
\begin{align}
	C_N \circ C_N^{\top}(P_{X_1},P_{X_{-1}}) &\leq \inf_{z \in \beta \mcB} 2Q \left(\sqrt{(z-\mu)^\intercal \Sigma^{-1} (z - \mu)} \right) -1.
\end{align}

\subsection{Connection to the classification problem}
We consider the linear classification function $f_{w}(x)= \sgn \left(w^\intercal x \right)$.

\paragraph{Classification accuracy:} We define the classification problem with respect to the classification accuracy $\bbe_{(x,y) \sim P} \left[\bm{1}(f_{w}(x)=y)\right] = \bbp_{(x,y) \sim P} \left[ f_{w}(x)=y \right] $, which also equals the standard $0-1$ loss subtracted from $1$. The aim of the learner is to maximize the classification accuracy, i.e. the classification problem is to find $w^*$ which is the solution of $\max_{w} \bbp_{(x,y) \sim P} \left[ f_{w}(x)=y \right]$.


\paragraph{Performance with adversary:} In the presence of an adversary, the classification problem becomes
\begin{align*}
	&\max_{w} \bbp_{(x,y) \sim P} \left[ f_{w}(x + h(x, y, w) )=y \right] \\
	& = \max_{w} \frac{1}{2} \bbp_{x \sim \mcN(\mu, \Sigma) } \left[ f_{w}(x + h(x, 1, w) )=1 \right ] + \frac{1}{2} \bbp_{x \sim \mcN(-\mu, \Sigma) } \left[ f_{w}(x + h(x, -1, w) )=-1 \right ].
\end{align*}
We will focus on the case with $y=1$ for ease of exposition since the analysis is identical. The correct classification event is then
\begin{align*}
	& f_{w}(x + h(x, 1, w) )=1, \\
	\Rightarrow &w^\intercal (x+h(x, 1, w))>0,\\
	\Rightarrow &w^\intercal x- w^\intercal \argmax_{z \in \beta \mcB} w^\intercal z>0, \\
	\Rightarrow & w^\intercal x- \max_{z \in \beta \mcB} w^\intercal z>0 \\
	\Rightarrow & w^\intercal x - \beta \|w\|_* >0,
\end{align*} 
where $\|\cdot\|_*$ is the dual norm for the norm associated with $\mcB$. This gives us the classification accuracy for the case with $y=1$ as $\max_{w} \bbe_{x \sim \mcN(\mu, \Sigma) }  \left[ \bm{1}(w^\intercal x - \beta \|w\|_* >0) \right] $. We now perform a few changes of variables to obtain an expression in terms of the standard normal distribution. For the first, we do $x' = x - \mu$, which gives us $ \max_{w} \bbe_{x' \sim \mcN(\bm{0}, \Sigma) }  \left[ \bm{1}(w^\intercal x' +w^\intercal \mu  - \beta \|w\|_* >0) \right] $. The second is $x'' = w^\intercal x'$, which results in $\max_{w} \bbe_{x'' \sim \mcN(0, \sigma^2) }  \left[ \bm{1}(x'' +w^\intercal \mu  - \beta \|w\|_* >0) \right] $, where $\sigma = \sqrt{w^ \intercal \Sigma w}$. Finally,  we set $x''' = \frac{x''}{\sigma}$, leading to $\max_{w} \bbe_{x''' \sim \mcN(0, 1) }  \left[ \bm{1}(x''' +\frac{w^\intercal \mu}{\sigma}  - \frac{\beta \|w\|_*}{\sigma} >0) \right] $. The classification problem is then
\begin{align}
	&\max_{w} \frac{1}{2} \bbp_{x \sim \mcN(\mu, \Sigma) } \left[ f_{w}(x + h(x, 1, w) )=1 \right ] + \frac{1}{2} \bbp_{x \sim \mcN(-\mu, \Sigma) } \left[ f_{w}(x + h(x, -1, w) )=-1 \right ],\\
	= &\max_{w} Q \left( \frac{\beta \|w\|_*  - w^\intercal \mu}{\sqrt{w^\intercal \Sigma w}} \right).
\end{align}
Since $Q(\cdot)$ is a monotonically decreasing function, it achieves its maximum at $w^* = \min_{w} \frac{\beta \|w\|_*  - w^\intercal \mu}{\sqrt{w^\intercal \Sigma w}}$. This is the dual problem to the one described in the previous section.

\section{Proof of Theorem 3} \label{appsec: thm3_proof}

The proof of Theorem \ref{thm:bayes} is below. The assumptions and setup are in Section \ref{sec: sample_complexity} of the main paper.

\begin{proof}
	Let $\hat{\mu} = \E[\mu|((X_1,Y_1),\ldots,(X_n,Y_n)]$.
	A straightforward computation using Bayes rule shows that $X_{n+1}\cdot Y_{n+1}|((X_1,Y_1),\ldots,(X_n,Y_n)) \sim \mcN(\hat{\mu},I)$.
	Thus after observing $n$ examples, the learner is faced with a hypothesis testing problem between two Gaussian distributions with known parameters.
	From Theorem \ref{thm:gauss-opt-transport}, the optimal loss for this problem is $Q(\alpha^*(\beta,\hat{\mu}))$.

	Furthermore, $\hat{\mu} = \frac{1}{m+n}\sum_{i=1}^n X_i$
	and $\hat{\mu} \sim \mcN(\zero,\frac{n}{m(m+n)}I)$.
	\bcomment{
		\begin{align*}
			x|\mu &\sim \mcN(\mu,I)\\
			\overline{x}|\mu &\sim \mcN(\mu,\frac{1}{n}I)\\
			\overline{x} &\sim \mcN(\zero,\frac{m+n}{mn}I)\\
			\hat{\mu} &\sim \mcN(\zero,\frac{n}{m(m+n)}I)\\
			\mu|\hat{\mu} &\sim \mcN(\hat{\mu},\frac{1}{m+n}I)\\
			x|\hat{\mu} &\sim \mcN(\hat{\mu},I)\\
		\end{align*}
	}
	Averaging over the training examples, we see that the expected loss is
	\[
	\E[Q(\alpha^*(\beta,\hat{\mu}))]
	= \Pr[T \geq \alpha^*(\beta,\hat{\mu})]
	= \Pr[(\hat{\mu},T) \in S(1,\beta)]
	= \Pr[Y \in S(\rho,\rho\beta)]
	\]
	where $T \in \R$, $T \sim \N(0,1)$ and $V \in \R^{d+1}$, $V \sim \N(0,I)$.
\end{proof}

\section{Results for an $\ell_{\infty}$ adversary} \label{appsec: linf_adv}

\begin{figure}[t]
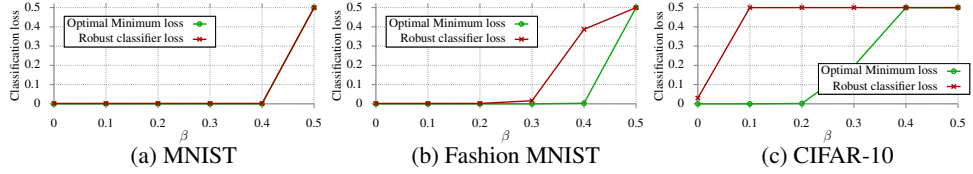

	\centering
	\subfloat[MNIST]{\resizebox{0.3\textwidth}{!}{\input{plots/3_7_mnist_linf_500_.tex}}\label{subfig: mnist}}
	\hspace{0mm}
	\subfloat[Fashion MNIST]{\resizebox{0.3\textwidth}{!}{\input{plots/3_7_fmnist_linf_500_.tex}}\label{subfig: fmnist}}
	\hspace{0mm}
	\subfloat[CIFAR-10]{\resizebox{0.3\textwidth}{!}{\input{plots/3_7_2000_cifar_linf_.tex}}\label{subfig: cifar-10}}
	\caption{Variation in minimum $0-1$ loss (adversarial robustness) as $\beta$ is varied for `3 vs. 7'. For MNIST and Fashion-MNIST, the loss of a robustly classifier (trained with iterative adversarial training) is also shown for a PGD adversary with an $\ell_{\infty}$ constraint.}
	\label{fig: linf}
\end{figure}

In Figures \ref{subfig: mnist} and \ref{subfig: fmnist}, we see that the lower bound in the case of $\ell_{\infty}$ adversaries is not very informative for checking if a robust classifier has good adversarial robustness since the bound is almost always 0, except at $\beta=0.5$, in which any two samples can be reached from one another with zero adversarial cost, reducing the maximum possible classification accuracy to 0.5. This implies that in the $\ell_{\infty}$ distance, these image datasets are very well separated even with an adversary and there exist good hypotheses $h$. For MNIST (till $\beta=0.4$) and Fashion MNIST ($\beta=0.3$), we find that iterative adversarial training is effective.

For the CIFAR-10 dataset \ref{subfig: cifar-10}, non-zero adversarial robustness occurs after $\beta=0.2$. However, current defense methods have only shown robust classification with $\beta$ up to 0.1, where the lower bound is 0. In future work, we will explore the limits of $\beta$ till which robust classification is possible with neural networks.

\end{document}